\def\Real{\mathop{\mathbb{R}}\nolimits}
\def\argmin{\mathop{\rm argmin}\nolimits}
\def\argmax{\mathop{\rm argmax}\nolimits}
\newcommand{\bb}{\boldsymbol{b}}
\newcommand{\bc}{\boldsymbol{c}}
\newcommand{\bv}{\boldsymbol{v}}
\newcommand{\bx}{\boldsymbol{x}}
\newcommand{\by}{\boldsymbol{y}}
\newcommand{\bz}{\boldsymbol{z}}
\newcommand{\bmu}{\boldsymbol{\mu}}
\newcommand{\bX}{\boldsymbol{X}}
\newcommand{\balpha}{\boldsymbol{\alpha}}
\newcommand{\btheta}{\boldsymbol{\theta}}
\newcommand{\bTheta}{\boldsymbol{\Theta}}
\newcommand{\Hi}{\mathcal{H}}
\newcommand{\M}{\mathcal{M}}
\newcommand{\G}{\mathcal{G}}
\newtheorem{thm}{Theorem}
\newtheorem{lem}{Lemma}
\title{Kernel \textit{k}-Means, By All Means: Algorithms and Strong Consistency}
\author[1]{Debolina Paul\thanks{Joint first authors contributed equally to this work.}}
\author[2]{Saptarshi Chakraborty$^\ast$}
 \author[3]{Swagatam Das}
 \author[4]{Jason Xu\thanks{Correspondence to: jason.q.xu@duke.edu.}}
 \affil[1]{Indian Statistical Institute, Kolkata, India}
 \affil[2]{Department of Statistics, University of California, Berkeley}
 \affil[3]{Electronics and Communication Sciences Unit, Indian Statistical Institute, Kolkata, India}
 \affil[4]{Department of Statistical Science, Duke University}
\date{\vspace{-5ex}}
\begin{document}

\maketitle

\begin{abstract}
Kernel $k$-means clustering is a powerful tool for unsupervised learning of non-linearly separable data. Since the earliest attempts, researchers have noted that such algorithms often become trapped by local minima arising from non-convexity of the underlying objective function. In this paper, we generalize recent results leveraging a general family of means to combat sub-optimal local solutions to the kernel and multi-kernel settings. Called Kernel Power $k$-Means, our algorithm makes use of majorization-minimization (MM) to better solve this non-convex problem. We show the method implicitly performs annealing in kernel feature space while retaining efficient, closed-form updates, and we rigorously characterize its convergence properties both from computational and statistical points of view. In particular, we characterize the large sample behavior of the proposed method by establishing strong consistency guarantees. Its merits are thoroughly validated on a suite of simulated datasets and real data benchmarks that feature non-linear and multi-view separation.
\end{abstract}

\section{Introduction}
\label{sec:intro}
Clustering---the task of partitioning a dataset into groups based on a measure of similarity---is a cornerstone of unsupervised learning. Among a vast literature and countless applications of various clustering algorithms, the simple yet effective $k$-means method endures as the most widely used approach \citep{macqueen1967some,lloyd1982least}. A center-based method, $k$-means seeks to partition data $\{\bx_1,\dots,\bx_n\} \subset\mathbb{R}^p$ into $k$ mutually exclusive classes that minimize within-cluster variance. Denoting the  centroids $\bTheta=\{{\btheta}_1, {\btheta}_2, \dots , {\btheta}_k\}$, this can be cast as minimization of the loss function,
\begin{equation}\label{obj1}  P(\bTheta) = \sum_{i=1}^n \min_{1 \le j \le k} \|\bx_i-\btheta_j\|^2.
 \end{equation}  
Despite its successes,  $k$-means relies on assuming that data are linearly separable and even then may stop short at poor local minima due to non-convexity of \eqref{obj1}. To remedy the first issue, researchers have applied kernel methods to $k$-means \citep{6790375, dhillon2004kernel,10.1016/j.patcog.2007.05.018}, which first embed the data into a higher dimensional feature space via a nonlinear mapping. The data may become better linearly separable in this richer representation, rendering $k$-means effective. To cope with the emerging complexity of real-world data, kernel $k$-means has been subjected to several variations and analyses in recent and ongoing works \citep{NIPS2014_5236, 10.1145/2809890.2809896, 7351209, 10.5555/2946645.3007100, 8267102, 10.5555/3322706.3322718}.  
Spectral clustering offers another nonlinear approach to clustering \citep{ng2002spectral,kang2018unified,lu2018nonconvex}, acting on eigenvectors of an affinity matrix constructed from the data. An explicit relationship between kernel and spectral clustering is established by \cite{dhillon2004kernel}. 

Becoming trapped in poor local minima remains an issue, in either case, a problem that has been highlighted since the earliest uses of kernel $k$-means. In place of Lloyd's classic algorithm for solving the $k$-means problem after spectral embedding, \cite{girolami2002mercer} describes a modified EM algorithm via stochastic optimization akin to the deterministic annealing, while
\cite{dhillon2004kernel} exploit a relaxation to perform spectral initialization that is further refined via Lloyd's algorithm. A popular approach to ameliorate  $k$-means' sensitivity to initialization is based on well-chosen seedings \citep{arthur2007k,bachem2016fast}. More recent work uses annealing to temper the non-convexity by solving a sequence of better behaved problems \citep{xu2019power,chakraborty2020entropy}, reviving ideas explored by \cite{zhang1999k}. These approaches are complementary to one another and can simultaneously combat local minima. While seeding directly carries over to the kernel setting, extending the latter is nontrivial.

In this paper, we generalize power $k$-means clustering to the kernel and multi-kernel settings. The resulting algorithm, called kernel power $k$-means (KPK), admits closed-form updates while performing annealing implicitly in the embedded feature space. In contrast to many popular methods, we establish strong consistency of the centroid estimates beyond the standard convergence guarantees and derive a natural extension to multi-view learning using more than one kernel.  We show that KPK significantly improves performance in detecting linearly non-separable clusters while retaining the efficiency and simplicity of the existing methods on a suite of simulated and real data.

The paper is organized as follows: we begin with an illustrative example and overview of the necessary background. In Section \ref{sec:methods}, we formalize the proposed method and derive an efficient algorithm and its multi-kernel extension. Theoretical properties are analyzed in Section \ref{sec:theory}; in particular, we establish uniform convergence of the objective sequence and strong consistency of the estimated centroids. These merits are thoroughly validated empirically in Section \ref{sec:results}, followed by a discussion in Section \ref{sec:discussion}.

\paragraph{Motivating Example}
\begin{figure}
\begin{subfigure}{.44\textwidth}
  \centering
  \includegraphics[width=1.0\linewidth]{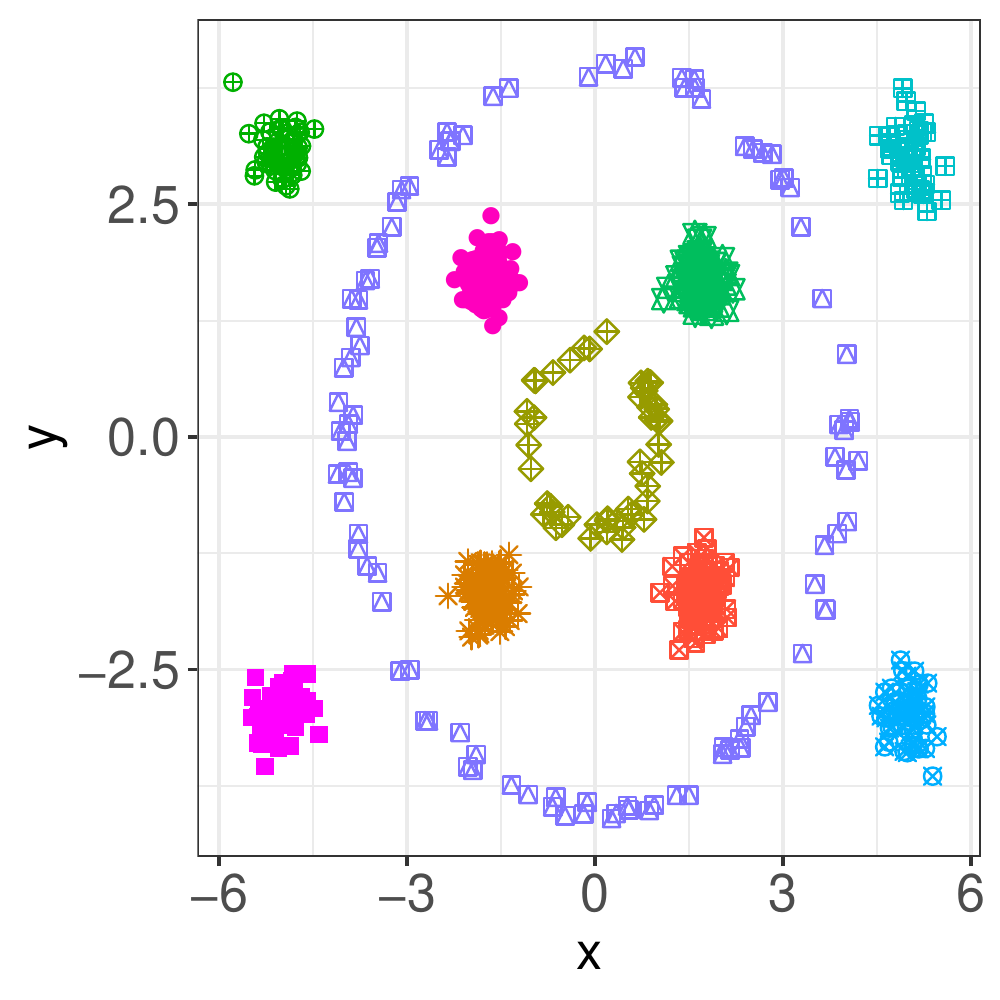}  
  \label{fig:kpk}
\end{subfigure}
\begin{subfigure}{.44\textwidth}
  \centering
  \includegraphics[width=1.0\linewidth]{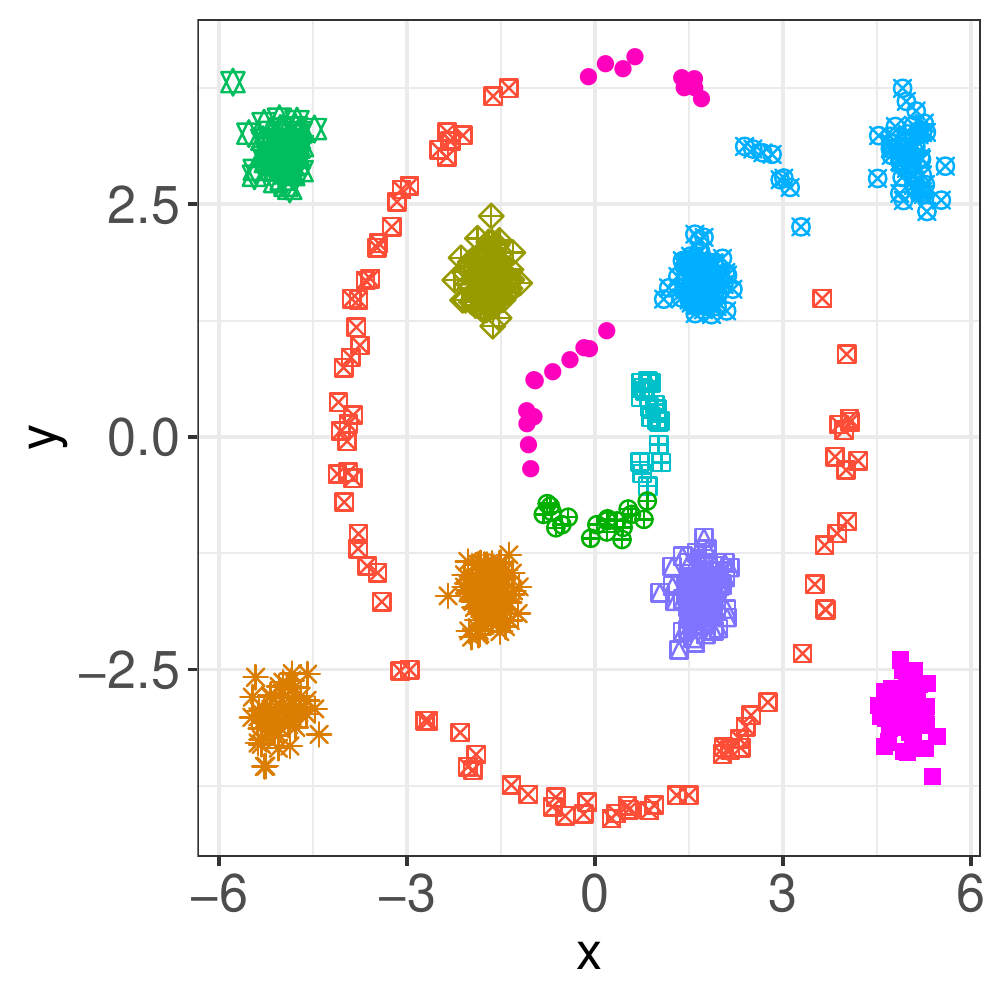}  
  \label{fig:kkmeans}
\end{subfigure} 
\caption{Despite the idealized setting, the best result using kernel $k$-means out of $20$ initializations (right) misclassifies in both rings
, with average adjusted Rand index (ARI) of $0.78$. Our proposed method (left) achieves an average ARI of $1.0$ over the $20$ restarts, indicating a perfect clustering on each trial.} 
\label{fig:motiv}
\end{figure}
Before proceeding, we motivate our contributions on a simple dataset consisting of ten clusters in two dimensions. As seen from Figure~\ref{fig:motiv}, the classes are clearly not linearly separable in the original feature space, but implicitly clustering in a higher dimension using a Gaussian kernel with $\sigma=1$ allows for successful classification. We compare kernel $k$-means to our proposed algorithm, plotting the best partitioning produced by each method out of $20$ matched initializations in Figure \ref{fig:motiv}. We see that kernel $k$-means is quite prone to falling into local minima even in this toy example, while our kernel power $k$-means method consistently arrives at the global minimum.

\subsection{Background}
\paragraph{Weighted kernel \textit{k}-means}
The (weighted) kernel version of $k$-means considers a similar objective function to \eqref{obj1} after embedding points into a new feature space by way of a non-linear mapping $\phi: \Real^p \to \Hi$ \citep{dhillon2004unified}, where $\Hi$ is a Hilbert space, replacing the distances in \eqref{obj1} by 
\[ 
\sum_{j=1}^k\sum_{\bx_i\in \mathcal{C}_j}w(\bx_i)\|\phi(\bx_i)-\btheta_j\|^2 ,
\, \text{ where } \, \,
\btheta_j=\frac{\sum_{\bb\in \mathcal{C}_j}w(\bb)\phi(\bb)}{\sum_{\bb\in \mathcal{C}_j}w(\bb)}=\argmin\limits_{\bz} \sum_{\bx_i\in \mathcal{C}_j}w(\bx_i)\|\phi(\bx_i)-\bz\|^2 .
\]
The squared Euclidean distance between $\phi(\bx_i)$ and $\theta_j$ can be expanded and given by
{\small
\begin{align*}
    &\bigg\|\phi(\bx_i)-\frac{\sum_{\bb\in \mathcal{C}_j}w(\bb)\phi(\bb)}{\sum_{\bb\in \mathcal{C}_j}w(\bb)}\bigg\|^2 =\langle \phi(\bx_i),\phi(\bx_i)\rangle+\frac{\sum_{\bb,\bc\in \mathcal{C}_j}w(\bb)w(\bc)\langle \phi(\bb),\phi(\bc)\rangle}{(\sum_{\bb\in \mathcal{C}_j}w(\bb))^2}-\frac{2\sum_{\bb\in \mathcal{C}_j}w(\bb)\langle \phi(\bx_i),\phi(\bb)\rangle}{\sum_{\bb\in \mathcal{C}_j}w(\bb)}
\end{align*}
}%
We see that all computations involving the data, enter as dot products, which can be calculated efficiently using a kernel function $\mathcal{K}(\cdot,\cdot)$. Specifically, Mercer's Theorem \citep{mercer1909} states that a continuous, symmetric, and positive semi-definite function $\mathcal{K}(\bx,\by)$ can be expressed as an inner product $\langle \phi(\bx), \phi(\by) \rangle$. Thus computing the kernel matrix $K$ on all pairs of data allows us to directly obtain and store these quantities without explicitly evaluating the mapping under $\phi$.

\paragraph{Majorization-minimization}
The MM principle has become increasingly prevalent for large-scale statistics and machine learning applications  \citep{mairal2015incremental,lange2016mm,xu2017generalized}. Instead of minimizing an objective of interest $f$ directly, an MM algorithm successively minimizes a sequence of simpler \textit{surrogate functions} $g(\btheta \mid \btheta_n)$ that need to meet the following two criteria: tangency  $g(\btheta_m \mid \btheta_m) =  f(\btheta_m)$ at the current estimate, and domination $g(\btheta \mid \btheta_m)  \geq f(\btheta)$ for all $\btheta$. The steps of an MM algorithm can then be specified by using the rule $\btheta_{m+1} := \argmin_{\btheta}\; g(\btheta \mid \btheta_m),$
which immediately implies the \textit{descent property}. Decreasing $g$ results in a descent in $f$: 
\begin{eqnarray*}
f(\btheta_{m+1}) \, \leq \, g(\btheta_{m+1} \mid \btheta_{m}) 
\, \le \,  g(\btheta_{m} \mid \btheta_{m}) 
\, = \, f(\btheta_{m}). \label{eq:descent}
\end{eqnarray*}
Note that $g(\btheta_{m+1} \mid \btheta_{m} ) \le g(\btheta_{m} \mid \btheta_{m})$ holds even when $\btheta_{m+1}$ does not minimizes $g$ exactly; instead  any step decreasing $g$ suffices. 
The MM principle provides a general strategy to transfer a complicated optimization problem onto a sequence of simpler tasks \citep{LanHunYan2000}, and incorporates the popular EM algorithm for maximum likelihood estimation with missing data as a special circumstance \citep{BecYanLan1997}. 
\paragraph{Power means}
 The power mean is a generalized mean defined by $M_s(\by)=\left(\frac{1}{k}\sum_{i=1}^k y_i^s \right)^{1/s}$ for a vector $\by$. Note $s=1$ yields the arithmetic mean, $s=-1$ the harmonic mean, and for $s>1$ it is proportional to the usual $\ell_s$-norm. 
 Power means satisfy several nice properties, including 
 the well-known power mean inequality
$M_s (\by) \le M_ t (\by)$ for any $s \le t$  \citep{steele2004cauchy}. 
Further, all power means satisfy the limits
    \begin{align}
      \lim_{s \to -\infty}M_s(\by) & =\min\{y_1,\ldots,y_k\} \label{eq:limit 1}, \\ 
    \lim_{s \to \infty}M_s(\by)&=\max\{y_1,\ldots,y_k\},  \label{eq:limit 2}
    \end{align}
but are differentiable for any finite $s$ with gradient
 $\frac{\partial}{\partial y_j}   M_ s( \by) 
 = \Big(\frac{1}{k}\sum_{i=1}^k y_i^s\Big)^{\frac{1}{s}-1} \frac{1}{k}y_j^{s-1}$ in contrast to their limiting functions.
While the $\min$ function appears in \eqref{obj1} and the harmonic mean has also been used for clustering \citep{zhang1999k,zhang2001generalized}, recent work generalizes these approaches to use the whole family of power means \citep{xu2019power,chakraborty2020entropy}. They decrease the loss 
\[ \, f_s(\Theta )=\sum_{i=1}^n M_s(\|\bx_i-\btheta_1\|^2,\ldots,\|\bx_i-\btheta_k\|^2) \, \]
iteratively along a a sequence $\{f_s\}$ where $s \rightarrow -\infty$
in place of \eqref{obj1}. By the relation \eqref{eq:limit 1}, the objectives $f_s$ approach $f_{-\infty}(\Theta)$, targeting the original $k$-means objective \eqref{obj1}, while the intermediate surfaces are better-behaved and smooth out poor local optima on the way. 
\section{Kernel Power \textit{k}-Means}\label{sec:methods}
\begin{algorithm}[ht]
\caption{Kernel Power $k$-means (KPK) Algorithm}\label{algo}
\begin{algorithmic}
\State \textbf{Input}:  $\bX \in \Real^{n \times p}$, $\eta>1$, $K$, $W^{(0)}$, $s_0=-1$. \qquad  \textbf{Output}: $\hat{W}$.
\Repeat
 \State \textbf{Step 1}: Compute $\|\phi(\bx_{i})-\btheta_j^{(m)}\|^2$ for all $i=1,\dots,n$ and $j=1,\dots,k$, using equation \eqref{update_w}.\\
 \State \textbf{Step 2}: Update $W$ by $\small \displaystyle \, \, w^{(m)}_{ij} \leftarrow \frac{\frac{1}{k}\|\phi(\bx_i)-\btheta_j^{(m)}\|^{2(s-1)}}{(\frac{1}{k}\sum_{l=1}^k\|\phi(\bx_i)-\btheta_j^{(m)}\|^{2s})^{(1-1/s)}}. \,$\\
 \State \textbf{Step 3} (Optional): Update $s \leftarrow s \cdot \eta$.\\
 \Until{objective \eqref{kpk} converges}
\end{algorithmic}
\end{algorithm}

We develop a new algorithm that performs clustering while annealing in feature space.
Let $\bx_1,\dots,\bx_n \in \Real^p$ denote the data to be clustered into $k$ disjoint clusters. Our proposed kernel power $k$-means algorithm is formulated by gradually decreasing $s$ while simultaneously seeking minimizers of the objectives
\begin{equation}\label{kpk}
    f_s(\bTheta)=\sum_{i=1}^nM_s(\|\phi(\bx_i)-\btheta_1\|^2,\dots,\|\phi(\bx_i)-\btheta_k\|^2).
\end{equation}
Here $\bTheta=\{\btheta_1,\dots, \btheta_k\}$ is the set of $k$ cluster centroids and $\phi:\Real^p \to \Hi$ is the kernel map. This formulation can be seen as a generalization that subsumes several existing methods as special cases: if $\phi(\bx)=\bx$, then \eqref{kpk} reduces  to the objective function of power $k$-means \citep{xu2019power}, while if further $s=-1$ or $s=-\infty$, we recover $k$-harmonic means \citep{zhang1999k} and the original $k$-means objective \citep{macqueen1967some}, respectively. Of course $\phi$ need not be the identity, and for a nontrivial choice of kernel, \eqref{kpk} reduces to the objective of standard kernel $k$-means upon setting $s=-\infty$.

Because the intermediate surfaces smooth out local minima, minimizing a sequence $\{f_s\}$, where $s$ decreases toward $-\infty$, will enable the benefits of annealing together with the ability to learn nonlinear separations. Utilizing a continuum of means as so has proven successful to improve $k$-means clustering, but kernelizing this idea is nontrivial due to high or infinite dimensionality of $\Hi$.

To address the resulting optimization problem, we will derive MM updates that sequentially decrease objective function \eqref{kpk}. As shown in \citep{xu2019power}, power means are concave whenever $s<1$, implying that the following tangent plane inequality holds for any anchor point $\by^{(m)}$:
\begin{equation}
\small\label{ineq}
    M_s(\by) \leq M_s(\by^{(m)}) + \sum_{j=1}^k \frac{\partial}{\partial y_j} M_s(\by^{(m)})(y_j-y^{(m)}_{j}).
\end{equation}
 Now substituting $\|\phi(\bx_i)-\theta_j\|^2$ for $y_j$ and $\|\phi(\bx_i)-\theta^{(m)}_{j}\|^2$ for $y^{(m)}_{j}$, we obtain
 \begin{align}
     &f_s(\bTheta)   \leq  f_s(\bTheta^{(m)})-\sum_{i=1}^n\sum_{j=1}^k w^{(m)}_{ij} \|\phi(\bx_i)-\btheta^{(m)}_j\|^2 +\sum_{i=1}^n\sum_{j=1}^k w^{(m)}_{ij}\|\phi(\bx_i)-\btheta_j\|^2 \, := \, \,g_s(\bTheta | \bTheta^{(m)} ) .\label{mm} 
 \end{align}
Here the partial derivatives are abbreviated \[ w^{(m)}_{ij}=\frac{\frac{1}{k}\|\phi(\bx_i)-\btheta_j^{(m)}\|^{2(s-1)}}{(\frac{1}{k}\sum_{l=1}^k\|\phi(\bx_i)-\btheta_j^{(m)}\|^{2s})^{(1-1/s)}}. \] 
The first two terms in \eqref{mm} are constant in $\bTheta$; minimizing $g_s(\bTheta | \bTheta^{(m)} )$ results in the MM iteration \[\btheta^{(m+1)}_{j}~=~\frac{\sum_{i=1}^n w^{m}_{ij} \phi(\bx_i)}{\sum_{i=1}^n w^{m}_{ij}}. \] 

These updates can be directly implemented in finite feature spaces \citep{xu2019power,chakraborty2020entropy}. However if $\Hi$ is infinite dimensional (e.g. if one takes the kernel to be Gaussian), one cannot evaluate the centroid updates in practice. 
To bypass this difficulty, we show that one can subsume an implicit update of $\btheta$ within the weight update using a kernel trick. Because $\btheta$ enters only within distance computations, we observe that
\begingroup
\allowdisplaybreaks
\begin{align}
 \|\phi(\bx_{i_0})-\btheta_j^{(m)}\|^2 
 & = \, \, \langle \phi(\bx_{i_0}), \phi(\bx_{i_0}) \rangle+ \langle \btheta_j^{(m)}, \btheta_j^{(m)} \rangle -2 \langle \phi(\bx_{i_0}),\btheta_j^{(m)} \rangle  \nonumber \\
 & =  K({i_0},{i_0})+ \bigg\langle \frac{\sum_{i=1}^n w^{(m-1)}_{ij} \phi(\bx_i)}{\sum_{i=1}^n w^{(m-1)}_{ij}}, \frac{\sum_{i=1}^n w^{(m-1)}_{ij} \phi(\bx_i)}{\sum_{i=1}^n w^{(m-1)}_{ij}} \bigg\rangle -2 \bigg\langle \phi(\bx_{i_0}),\frac{\sum_{i=1}^n w^{(m-1)}_{ij} \phi(\bx_i)}{\sum_{i=1}^n w^{(m-1)}_{ij}}\bigg\rangle  \nonumber \\
 & =  K(i_0,i_0)+ \frac{\sum_{i=1}^n \sum_{i^\prime= 1}^n w^{(m-1)}_{ij} w^{(m-1)}_{i^\prime j} \langle \phi(\bx_i), \phi(\bx_{i^\prime})  \rangle}{(\sum_{i=1}^n w^{(m-1)}_{ij})^2} -2 \frac{\sum_{i^\prime=1}^n w^{(m-1)}_{i^\prime j} \langle \phi(\bx_{i_0}), \phi(\bx_{i^\prime}) \rangle }{\sum_{i=1}^n w^{(m-1)}_{ij}}  \nonumber  \\
 & =  K(i_0,i_0)+ \frac{\sum_{i=1}^n \sum_{i^\prime= 1}^n w^{(m-1)}_{ij} w^{(m-1)}_{i^\prime j} K(i,i^\prime)}{(\sum_{i=1}^n w^{(m-1)}_{ij})^2} -2 \frac{\sum_{i^\prime=1}^n w^{(m-1)}_{i^\prime j} K(i_0, i^\prime)}{\sum_{i=1}^n w^{(m-1)}_{ij}}.\label{update_w}
\end{align}
\endgroup
Thus an MM iteration can proceed by computing pairwise distances from the data to centroids only using $K$ via \eqref{update_w}, and then updating $W$ without ever explicitly dealing with centroids $\btheta_j$. The resulting updates are simple and effective, summarized in Algorithm \ref{algo}. Note the kernel pairs $K(i,j)$ for points $\bx_i, \bx_j$ can be computed only once and cached.
 Computing and storing the kernel matrix $K$ is $\mathcal{O}(n^2)$ so that Step 1 only requires looking up $n^2 k$ necessary values, while time complexity of Step 2 is $\mathcal{O}(npk)$. Cluster labels $\bc$ are determined by reading off entries from the output $\hat{W}$: for each point $\bx_i$, note $c_i=\argmin_{1 \le j \le k} \|\phi(\bx_i)-\btheta_j\|^2$  is equivalent to assigning $c_i = \argmax_{1 \le j \le k} \hat{w}_{ij}$. 
 
 \subsection{Extension to multiple kernel settings}
For some data, transforming the features by way of a single map $\phi$ does not provide a rich enough embedding for successful clustering. In such cases a multi-view clustering method \citep{yang2018multi} may be advantageous. Using multiple kernels in the kernel $k$-means framework \citep{zhao2009multiple,du2015robust} can provide a straightforward extension to multi-view clustering. We show how to extend the above to such a multi-kernel learning approach. Here one assumes that each sample can be well-represented by a collection of feature maps $\{\phi_l(\cdot)\}_{l=1}^L$. Let kernel $\mathcal{K}_l$ correspond to the map $\phi_l(\cdot)$, and combine this collection to form a new feature map $\phi_{\balpha}(\bx):=(\sqrt{\alpha_1} \phi_1(\bx)^\top,\dots,\sqrt{\alpha_L} \phi_L(\bx)^\top)^\top$. The coefficients $\balpha=(\alpha_1,\dots,\alpha_L)$ act as weights satisfying $\sum_{l=1}^L \alpha_l=1$; 
the combined map $\phi_{\balpha}$ yields a kernel and induces a corresponding norm 
\begin{align*}
 & \mathcal{K}_{\balpha}(\bx,\by)=\langle \phi_{\balpha}(\bx) , \phi_{\balpha}(\by) \rangle = \sum_{l=1}^L \alpha_l \mathcal{K}_l(\bx,\by);\\
& \|\phi_{\balpha}(\bx)-\phi_{\balpha}(\by)\|^2= \sum_{l=1}^L \alpha_l \|\phi_l(\bx)-\phi_l(\by)\|^2_l.  
\end{align*}
Thus we can extend our approach to accommodate multiple kernels by replacing $K$ in Equation \eqref{kpk} by~$\mathcal{K}_{\balpha}$ (and hence the squared norm $\|\cdot\|^2$ by $\sum_{l=1}^L \alpha_l \|\cdot\|^2_l$). To enforce the simplex constraint on weights $\balpha$, we include \textit{entropy penalties} \citep{jing2007entropy}; this choice will be crucial toward preserving simple, closed form updates. The multi-view task can now be cast as minimization of the following objective function:
\begin{align}
    f_s(\bTheta^\prime, \bmu) & = \sum_{i=1}^n M_s(\|\phi_{\balpha}(\bx_i)-\btheta_1^\prime\|^2,\ldots, \|\phi_{\balpha}(\bx_i)-\btheta_k^\prime\|^2) + \lambda \sum_{l=1}^L \alpha_l \log \alpha_l.\label{mkpk1}
\end{align}

Here $\btheta_j^\prime= (\btheta_{j,1}^{ \top},\dots,\btheta_{j,L}^{ \top})^\top \in \Hi_1\times \dots \times \Hi_L$, and $\lambda>0$ is a regularization parameter. Note by reparametrizing  $\btheta_j=\big(\frac{1}{\sqrt{\alpha_1}}\btheta_{j,1}^{ \top},\dots,\frac{1}{\sqrt{\alpha_L}}\btheta_{j,L}^{ \top}\big)^\top,$ rewriting Equation \eqref{mkpk1} now separates over kernels $l$:
\begin{equation}\label{mkpk2}
 f_s(\bTheta, \bmu)\nonumber  = \sum_{i=1}^n M_s\big(\sum_{l=1}^L \alpha_l \|\phi_l(\bx_i)-\btheta_{1,l}\|^2,\dots, \sum_{l=1}^L \alpha_l\|\phi_l(\bx_i)-\btheta_{k,l}\|^2\big) + \lambda \sum_{l=1}^L \alpha_l \log \alpha_l.
\end{equation}

\begin{algorithm*}[ht]
\caption{Multi-Kernel Power $k$-means (MKPK) Algorithm}\label{algo2}
\begin{algorithmic}
\State \textbf{Input}: $\bX \in \Real^{n \times p}$, $\eta>1$, $K_1, \dots, K_L$, $\balpha^{(0)}$, $W^{(0)}$, $s_0=-1$. \qquad \textbf{Output}: $\hat{W}$, $\hat{\balpha}$.
\Repeat
 \State \textbf{Step 1}: Compute $\|\phi_l(\bx_{i})-\btheta_{j,l}^{(m)}\|^2$ for all $i=1,\dots,n$; $j=1,\dots,k$; $l=1,\dots,L$ via \eqref{eqq1}.\\
 \State \textbf{Step 2}: Update $W$ by $ \small \displaystyle w^{(m)}_{ij} \leftarrow \frac{\frac{1}{k} \big(\sum_{l=1}^L \alpha_l^{(m)} \|\phi_l(\bx_i)-\btheta_{j,l}^{(m)}\|\big)^{2(s-1)}}{\bigg(\frac{1}{k}\sum_{t=1}^k\big(\sum_{l=1}^L \alpha_l^{(m)}\|\phi_l(\bx_i)-\btheta_{t,l}^{(m)}\|\big)^{2s}\bigg)^{(1-1/s)}}.$
 \State \textbf{Step 3}: Update $\balpha$ by $ \small \displaystyle \alpha_l^{(m+1)} \leftarrow \frac{\exp\bigg\{- \frac{1}{\lambda}\sum_{i=1}^n\sum_{j=1}^k w_{ij}^{(m)} \|\phi_l(\bx_i)-\btheta_{j,l}^{(m)}\|^2\bigg\}}{\sum_{t=1}^L\exp\bigg\{- \frac{1}{\lambda}\sum_{i=1}^n\sum_{j=1}^k w_{ij}^{(m)} \|\phi_l(\bx_i)-\btheta_{j,t}^{(m)}\|^2\bigg\}}.$
 \State \textbf{Setp 4} (Optional): Update $s \leftarrow s \cdot \eta$.\\
 \Until{objective \eqref{mkpk1} converges}
\end{algorithmic}
\end{algorithm*} 

\paragraph{Optimization}
The entropy incentive appearing as the final term in \eqref{mkpk2} will enable us to retain efficient, closed form MM steps. In Equation \eqref{ineq}, we substitute $\sum_{l=1}^L \alpha_l \|\phi_l(\bx_i)-\btheta_{j,l}\|^2$ for $y_j$ and $\sum_{l=1}^L \alpha_l^{(m)} \|\phi_l(\bx_i)-\btheta_{j,l}^{(m)}\|^2$ for $y_j ^{(m)}$;  via similar arguments to the single kernel case detailed in the appendix, we derive an MM algorithm summarized in Algorithm \ref{algo2}. 
As before, 
since $\btheta_{j,l}$ may lie in an infinite dimensional Hilbert space, denoting $K_l(i,j)=\mathcal{K}_l(\bx_i,\bx_j)$, we employ a kernel trick to compute differences
\begin{align}\label{eqq1}
     &\|\phi_l(\bx_{i_0})-\btheta_{j,l}^{(m)}\|^2 
     = K_l(i_0,i_0) + \frac{\sum\limits_{i,i^\prime=1}^n  w^{(m-1)}_{ij} w^{(m-1)}_{i^\prime j} K_l(i,i^\prime)}{(\sum_{i=1}^n w^{(m-1)}_{ij})^2} 
    -2\frac{\sum\limits_{i^\prime=1}^n w^{(m-1)}_{i^\prime j} K_l(i_0, i^\prime)}{\sum_{i=1}^n w^{(m-1)}_{ij}}.
\end{align}

\section{Theoretical properties}\label{sec:theory}

We now derive several properties related to the convergence of our method. The first two results characterize the sequence of minimizers, and extend arguments from \cite{xu2019power} to the kernel setting, with proofs in the appendix. We then develop our main theoretical result establishing strong consistency of the centroids.
The results are distribution-free in that we only assume that the data $\bX_1,\dots,\bX_n \in \Real^p$ are independently and identically distributed according to some distribution $P$ with compact support $C \subset \Real^p$. The exposition focuses on a single kernel; these arguments also apply to the multi-kernel case, and the results are formally extended in the appendix for completeness.

We first show that the minima $\bTheta_{n,s}$ of the surrogates lie in a convex hull in the image $\phi(C)$. For notational simplicity, let $\mathscr{C}(A)$ denote the closed convex hull of a set $A$.
\begin{thm}\label{hull}
Assume $\phi : C \longrightarrow \mathcal{H}$ to be a function from C to some Hilbert space $\mathcal{H}$. Let $\bTheta_{n,s}$ be the minimizer of $f_s(\bTheta)$, $s \leq 1$ . Then $\bTheta_{n,s}$ lies in the compact Cartesian product $\mathscr{C}(\phi(C))^k$. 
\end{thm}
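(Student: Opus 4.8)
The plan is to argue by contradiction using orthogonal projection onto a convex hull, exploiting that the power mean $M_s$ is increasing in each of its arguments. Let $\mathscr{C}_n := \mathscr{C}(\{\phi(\bx_1),\dots,\phi(\bx_n)\})$ denote the closed convex hull of the finite set of embedded data points. Being the convex hull of finitely many points, $\mathscr{C}_n$ is compact; and since $\phi(\bx_i)\in\phi(C)$ for every $i$ while $\mathscr{C}(\phi(C))$ is closed and convex, we have $\mathscr{C}_n \subseteq \mathscr{C}(\phi(C))$. It therefore suffices to show that every component $\btheta_j$ of a minimizer $\bTheta_{n,s}$ lies in $\mathscr{C}_n$.

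The key ingredient is monotonicity: the gradient formula $\frac{\partial}{\partial y_j}M_s(\by) = \big(\frac1k\sum_l y_l^s\big)^{1/s-1}\frac1k y_j^{s-1}$ shows $M_s$ is strictly increasing in each coordinate $y_j>0$ (for $s<0$ the sign still works out, since the monotonicity of $\sum_l y_l^s$ is reversed but so is raising to the negative power $1/s$). Now suppose toward contradiction that some $\btheta_j \notin \mathscr{C}_n$, and let $\tilde\btheta_j = P_{\mathscr{C}_n}(\btheta_j)$ be its metric projection, which exists and is unique in the Hilbert space $\mathcal H$ and satisfies the obtuse-angle inequality $\langle \btheta_j - \tilde\btheta_j,\, \bz - \tilde\btheta_j\rangle \le 0$ for all $\bz \in \mathscr{C}_n$. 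Taking $\bz = \phi(\bx_i)\in \mathscr{C}_n$ and expanding gives
\[
\|\phi(\bx_i)-\btheta_j\|^2 = \|\phi(\bx_i)-\tilde\btheta_j\|^2 + \|\tilde\btheta_j-\btheta_j\|^2 - 2\langle \phi(\bx_i)-\tilde\btheta_j,\,\btheta_j-\tilde\btheta_j\rangle \ge \|\phi(\bx_i)-\tilde\btheta_j\|^2 + \|\tilde\btheta_j-\btheta_j\|^2 .
\]
Because $\btheta_j \notin \mathscr{C}_n$ forces $\|\tilde\btheta_j-\btheta_j\|^2>0$, each squared distance in coordinate $j$ strictly decreases while all coordinates $l\neq j$ are untouched. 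By the monotonicity of $M_s$, replacing $\btheta_j$ by $\tilde\btheta_j$ strictly decreases every summand of $f_s$, contradicting minimality. Hence each $\btheta_j\in\mathscr{C}_n\subseteq\mathscr{C}(\phi(C))$, so $\bTheta_{n,s}\in\mathscr{C}(\phi(C))^k$.

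Finally I would justify the compactness assertion separately: assuming $\phi$ is continuous so that $\phi(C)$ is compact, the closed convex hull of a compact subset of a complete space is itself compact, whence $\mathscr{C}(\phi(C))^k$ is compact. The step requiring the most care is the strictness of the decrease in the regime $s<0$, where $M_s$ is insensitive to $y_j$ on the degenerate configurations in which some other coordinate vanishes (i.e.\ $\btheta_l=\phi(\bx_i)$ for some $l\neq j$ and all $i$); there one either restricts strict monotonicity to the generic case or simply observes that the projected iterate is in any event a minimizer lying in $\mathscr{C}_n$. Handling this boundary behavior, together with the standard but nontrivial compactness of the closed convex hull, is where the technical attention lies, while the projection inequality is the conceptual core of the argument.
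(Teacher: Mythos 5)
Your proof is correct and takes essentially the same route as the paper's: metric projection onto a closed convex hull, the obtuse-angle inequality to show no squared distance increases, coordinatewise monotonicity of $M_s$, and compactness plus continuity to conclude that the minimizer can be taken inside the hull. The only deviations are refinements rather than a different argument --- you project onto the hull of the $n$ embedded data points instead of $\mathscr{C}(\phi(C))$ (a slightly stronger conclusion whose compactness is immediate), and you correctly abandon the strict-decrease/contradiction framing (which indeed fails for $s<0$ in degenerate configurations) in favor of the paper's non-strict ``projection does not increase the objective'' argument.
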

The next result strengthens Equation \eqref{eq:limit 2}: surrogates converge \textit{uniformly} on the compact set $\phi(C)^k$. 
\begin{thm}
\label{uniform}
For any decreasing sequence $\{s_m\}_{m=1}^\infty$ such that $s_1 \leq 1$ and $s_m \to -\infty$, the functions $f_{s_m}(\bTheta)$ converge uniformly to $f_{-\infty}(\bTheta)$ on $\mathscr{C}(\phi(C))^k$.
\end{thm}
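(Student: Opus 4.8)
The plan is to reduce the claim to a single explicit, $\bTheta$-free estimate on the gap between a power mean and its limiting minimum. Since $f_s$ is a finite sum of $n$ terms of the form $M_s(\by_i(\bTheta))$ with $\by_i(\bTheta) = (\|\phi(\bx_i)-\btheta_1\|^2,\dots,\|\phi(\bx_i)-\btheta_k\|^2)$, uniform convergence of $f_{s_m}$ to $f_{-\infty}$ reduces to bounding each summand uniformly over $\bTheta \in \mathscr{C}(\phi(C))^k$. Concretely, if I can show
\[
0 \le M_{s_m}(\by) - \min_{1\le j\le k} y_j \le \varepsilon_m
\]
for a sequence $\varepsilon_m \to 0$ not depending on $\by$ (over the range of values $\by$ takes), then summing over $i$ gives $0 \le f_{s_m}(\bTheta) - f_{-\infty}(\bTheta) \le n\varepsilon_m$ uniformly, which is exactly the assertion.

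The lower bound is immediate: since $M_{-\infty}(\by) = \min_j y_j$ by \eqref{eq:limit 1}, monotonicity from the power mean inequality gives $\min_j y_j \le M_{s_m}(\by)$. The key step is the matching upper bound. Writing $t = -s_m > 0$ and $m^\star = \min_j y_j$, I would lower-bound the power sum by its largest term, $\tfrac{1}{k}\sum_{j} y_j^{-t} \ge \tfrac{1}{k}(m^\star)^{-t}$, and, since $x \mapsto x^{-1/t}$ is decreasing, conclude
\[
M_{s_m}(\by) = \Big(\tfrac{1}{k}\textstyle\sum_j y_j^{-t}\Big)^{-1/t} \le \big(\tfrac{1}{k}(m^\star)^{-t}\big)^{-1/t} = k^{1/t}\, m^\star = k^{-1/s_m}\, m^\star.
\]
Hence $0 \le M_{s_m}(\by) - m^\star \le (k^{-1/s_m} - 1)\, m^\star$.

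To make this uniform, I would invoke the compactness of $\mathscr{C}(\phi(C))$ from Theorem \ref{hull}: it is bounded with some finite diameter $D$, so every coordinate satisfies $y_j = \|\phi(\bx_i)-\btheta_j\|^2 \le D^2$ whenever $\bTheta \in \mathscr{C}(\phi(C))^k$ and $\phi(\bx_i) \in \phi(C)$. Thus $m^\star \le D^2$ for every $i$, giving
\[
0 \le f_{s_m}(\bTheta) - f_{-\infty}(\bTheta) \le n\,(k^{-1/s_m} - 1)\,D^2 ,
\]
whose right-hand side is independent of $\bTheta$ and tends to $0$, because $s_m \to -\infty$ forces $k^{-1/s_m} \to 1$. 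This is precisely uniform convergence, with an explicit rate.

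The main obstacle is less the inequality than its hypotheses. First, I must confirm the domain is genuinely bounded: this is where the compactness asserted in Theorem \ref{hull} (and, implicitly, continuity of $\phi$, so that $\phi(C)$ and its closed convex hull are compact) does the real work. Second, the upper bound silently assumes $m^\star > 0$; the degenerate case $m^\star = 0$, in which some $\phi(\bx_i)$ coincides with a centroid, must be handled separately, but there both $M_{s_m}(\by) = 0$ and $\min_j y_j = 0$, so the gap vanishes and the bound holds trivially. A cleaner but less quantitative route would observe that the $f_{s_m}$ decrease monotonically to $f_{-\infty}$ (again by the power mean inequality) and are continuous on the compact set $\mathscr{C}(\phi(C))^k$, whence Dini's theorem yields uniform convergence directly; I prefer the explicit estimate above precisely because it also delivers a rate.
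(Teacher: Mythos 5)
Your proof is correct, and it takes a genuinely different route from the paper's. The paper's argument is exactly the ``cleaner but less quantitative route'' you mention at the end: by the power mean inequality, $f_{s_m}(\bTheta)$ decreases monotonically (in $m$) to $f_{-\infty}(\bTheta)$ pointwise, and since these functions are continuous on the compact set $\mathscr{C}(\phi(C))^k$, Dini's theorem upgrades pointwise monotone convergence to uniform convergence. Your argument instead establishes the sandwich $\min_j y_j \le M_{s_m}(\by) \le k^{-1/s_m} \min_j y_j$ for $s_m<0$ (with the degenerate case $\min_j y_j=0$ handled by the usual convention $M_{s_m}(\by)=0$), and converts it via boundedness of $\mathscr{C}(\phi(C))$ into the explicit estimate $0 \le f_{s_m}(\bTheta)-f_{-\infty}(\bTheta) \le n\,(k^{-1/s_m}-1)\,D^2$, which is uniform in $\bTheta$. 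Both are valid, but the trade-offs differ. Your route buys three things the paper's does not: (i) an explicit rate, since $k^{-1/s_m}-1 = e^{-\log k/s_m}-1 \sim \log k/|s_m|$, which is directly relevant to the open question about annealing schedules raised in the paper's discussion; (ii) weaker hypotheses, in that you only need $\mathscr{C}(\phi(C))$ to have finite diameter (boundedness), not full compactness, nor continuity of the limit $f_{-\infty}$, nor Dini's theorem; and (iii) no use of monotonicity of $\{s_m\}$ --- any sequence tending to $-\infty$ works, since only the tail with $s_m<0$ enters the bound (the finitely many initial terms with $s_m \in [0,1]$ are irrelevant to uniform convergence, a point worth stating explicitly). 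The paper's route is shorter and cleaner but purely qualitative. You are also right to flag where the real content lies: finiteness of $D$ (equivalently, boundedness of $\phi(C)$) rests on continuity of $\phi$ on the compact set $C$, i.e.\ on assumption A1, which the paper's statement of Theorem \ref{hull} leaves implicit even though its compactness claim requires it.
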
 
In particular, the uniform convergence in Theorem \ref{uniform} immediately implies that the sequence of minimizers $\bTheta_{n,s}$ converges to $\bTheta_{n,\infty}$, the minimizer of the kernel \textit{k-}means objective $f_{-\infty}(\bTheta)$.

Toward proving strong consistency, we move to establish a Uniform Strong Law of Large Numbers (USLLN) which plays a pivotal role in the proof of our main theorem.
To lighten notation, abbreviate \[ \M_s(\bx,\bTheta)=M_s(\|\phi(\bx)-\btheta_1\|^2,\dots,\|\phi(\bx)-\btheta_k\|^2), \] and define $\bTheta^\ast$ to be the set of $k$ centroids minimizing the population-level loss \[ \Psi(\bTheta,P)=\int \min_{1 \le j \le k} \|\phi(\bx)-\btheta_j\|^2 dP.\] This mirrors the notation that $\bTheta_{n,s}$ are the minimizers of $\int \M_s(\bx,\bTheta)dP_n$. Establishing consistency amounts to showing that $\bTheta_{n,s} \overset{a.s.}{\to} \bTheta^\ast$ \, as $n \to \infty$ and $s \to -\infty $; we do so under the  regularity conditions:  
\begin{enumerate}
    \item[A1.] The map $\phi : (C,\|\cdot\|_2) \to (\Hi,\|\cdot\|)$ is continuous.
    \item[A2.] For any $r>0$, there exists $\epsilon>0$ such that for all $\bTheta \in \mathscr{C}(\phi(C))^k \setminus B(\bTheta^\ast,r)$, we have $\Psi(\bTheta,P)> \Psi (\bTheta^\ast,P)+\epsilon$.
\end{enumerate}

Before proving the results, we remark that A1 and A2 are quite mild assumptions: $\phi$ is only assumed continuous and need not be Lipschitz as in \cite{yan2016robustness}. Commonly used choices such as the Gaussian and polynomial kernels all satisfy this assumption. A2 is also standard \citep{pollard1981strong,chakraborty2019strong} and  only posits that the population minimizer of $\Psi(\cdot,P)$ is identifiable.
\begin{lem}\label{uslln}
    (USLLN) Under  A1 and A2, for $s_0\le-1$,  $$\sup_{s \le s_0, \bTheta \in \mathscr{C}(\phi(C))^k\}}\left|\int \M_s(\bx,\bTheta) dP_n-\int \M_s(\bx,\bTheta) dP\right| \to 0$$ almost surely under $P$.
\end{lem}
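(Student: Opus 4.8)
The plan is to establish the Uniform Strong Law of Large Numbers via a standard bracketing/covering argument, reducing the supremum over the infinite family indexed by $(s,\bTheta)$ to a finite net on which the ordinary SLLN applies pointwise. First I would argue that the relevant index set is effectively compact: by Theorem \ref{hull} the centroids range over the compact set $\mathscr{C}(\phi(C))^k$, and although $s$ ranges over the unbounded interval $(-\infty, s_0]$, the power mean inequality and the limit \eqref{eq:limit 1} let me control the behavior as $s \to -\infty$ uniformly. The key structural fact I would exploit is that $\M_s(\bx,\bTheta)$ is bounded and equicontinuous in its arguments: since $\phi$ is continuous (assumption A1) on the compact domain $C$, the image $\phi(C)$ is bounded, so the squared distances $\|\phi(\bx)-\btheta_j\|^2$ lie in a fixed compact interval $[0,D]$ for all $\bx \in C$ and $\bTheta \in \mathscr{C}(\phi(C))^k$; hence $\M_s$ is uniformly bounded, furnishing the integrability needed for the SLLN.

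The heart of the argument is an equicontinuity estimate in $(s,\bTheta)$ uniform over $\bx$. For the $\bTheta$-direction, I would show $\M_s(\bx,\cdot)$ is Lipschitz (or at least uniformly continuous) on $\mathscr{C}(\phi(C))^k$ with a modulus independent of $s \le s_0$ and $\bx$; this follows from the gradient formula for $M_s$ given in the excerpt together with the uniform bound $\|\phi(\bx)-\btheta_j\|^2 \in [0,D]$, taking care that the denominator in $w_{ij}$ stays bounded away from zero, which is where values near the boundary (distances approaching $0$) require attention. For the $s$-direction, the crucial observation is that $M_s(\by)$ is monotone nondecreasing in $s$ by the power mean inequality, and by Theorem \ref{uniform} the family converges uniformly to $M_{-\infty} = \min$; monotonicity plus a uniform limit gives, via a Dini-type argument, that for any $\delta>0$ there is a finite partition $s_0 > s_1 > \cdots > s_N = -\infty$ of the $s$-axis such that $\M_s$ varies by at most $\delta$ within each subinterval, uniformly in $\bx$ and $\bTheta$.

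Combining the two moduli, I would cover the index set by finitely many ``brackets'': choose a finite $\delta$-net $\{\bTheta^{(1)},\dots,\bTheta^{(M)}\}$ of $\mathscr{C}(\phi(C))^k$ and the finite grid of $s$-values above, so that every $(s,\bTheta)$ lies within $\delta$ (uniformly in $\bx$) of some grid point $(s_a,\bTheta^{(b)})$. Then I would bound the supremum by
\begin{align*}
\sup_{s\le s_0,\,\bTheta}\left|\int \M_s\,dP_n - \int \M_s\,dP\right|
&\le \max_{a,b}\left|\int \M_{s_a}(\cdot,\bTheta^{(b)})\,dP_n - \int \M_{s_a}(\cdot,\bTheta^{(b)})\,dP\right| + 2\delta.
\end{align*}
The finite maximum converges to $0$ almost surely by applying the ordinary SLLN to each of the finitely many integrable functions $\M_{s_a}(\cdot,\bTheta^{(b)})$ and intersecting the finitely many probability-one events; since $\delta$ is arbitrary, the left side tends to $0$ almost surely.

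The step I expect to be the main obstacle is the uniform-in-$s$ control near $s = -\infty$, since the index set is genuinely noncompact there and the weights $w_{ij}$ can degenerate. The clean route is to handle $s=-\infty$ as a genuine grid point by invoking Theorem \ref{uniform} to absorb the tail $s \le s_N$ into a single bracket around the $\min$ function, rather than attempting uniform Lipschitz control all the way down; I would also need to verify that the equicontinuity modulus in $\bTheta$ does not blow up as $s \to -\infty$, for which the monotonicity and the already-established uniform convergence to $M_{-\infty}$ are the essential tools that let me trade the problematic small-$s$ region for a controlled neighborhood of the limiting objective.
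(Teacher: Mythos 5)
Your proposal is correct and follows essentially the same route as the paper's proof: a bracketing argument built on finite $\delta$-nets of $\mathscr{C}(\phi(C))^k$ and of a truncated interval $[s_1,s_0]$, with the noncompact tail $s<s_1$ absorbed into a single bracket using the uniform convergence of Theorem \ref{uniform}, followed by the SLLN applied to the finitely many bracket functions. The only minor difference is that the paper obtains the required equicontinuity directly from joint continuity of $\M_s(\bx,\bTheta)$ on the compact set $[s_1,s_0]\times C\times\mathscr{C}(\phi(C))^k$ via the Heine--Cantor theorem, so the Lipschitz/gradient control (and your concern about degenerating denominators in the weights) is never needed.
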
 
\begin{proof}
    Define  $\mathcal{G}=\{\M_s(\bx,\bTheta): s~\le~s_0 \text{ and }\bTheta \in \mathscr{C}(\phi(C))^k\}$. It is enough to show that for any $\epsilon>0$, there exists $\mathcal{G}_\epsilon \subset \mathcal{G}$ such that $|\G_\epsilon|<\infty$ and for all $g \in \mathcal{G}$, there exist $\dot{g},\bar{g} \in \G_\epsilon$ with $\dot{g} \leq g \leq \bar{g}$ such that $\int(\bar{g}-\dot{g})dP < \epsilon$.
    
    We begin by observing that since $\M_s(\bx,\bTheta)$ converges uniformly to $\min_{\btheta \in \bTheta}\|\phi(\bx)-\btheta\|^2$, as $s \to -\infty$ (due to Theorem \ref{uniform}), we can find $s_1 < s_0$ such that if $s \le  s_1$, then \[ |\M_s(\bx,\bTheta) - \min_{\btheta \in \bTheta}\|\phi(\bx)-\btheta\|^2| < \epsilon/8 \] for all $\bTheta \in \mathscr{C}(\phi(C))^k$. Thus, for all $s,s^\prime \le  s_1$,
    \begin{equation}\label{lab1}
        \left|\M_s(\bx,\bTheta) - \M_{s^\prime}(\bx,\bTheta)\right| < \epsilon/4.  
    \end{equation}
    We begin by noting that $\phi(C)$ is the image of a compact set $C$ under a continuous map $\phi$, and is therefore itself compact in the metric space $(\mathcal{H},\|\cdot\|)$. Since $\Hi$ is locally convex and completely metrizable, $\mathscr{C}(\phi(C))$ is compact (Theorem 5.35 of \cite{aliprantis1986border}). Since $\mathcal{M}_s(\bx,\bTheta)$, as a function of $(s,\bx,\bTheta)$ is  continuous on the compact set $[s_1,s_0] \times C \times \mathscr{C}(\phi(C))^k$, it is uniformly continuous by the Heine-Cantor theorem \citep{apostol1964mathematical}. 
    This implies that for any $\epsilon > 0$, we can choose $\delta$ small enough such that for any two sets of centroids $\bTheta, \bTheta'$ such that $\|\btheta_j-\btheta_j^\prime\|< \delta$ for all $j=1,\dots,k$ and $|s-s^\prime|< \delta$ (i.e. $s,s^\prime \in [s_1,s_0]$), we have 
    \begin{align}
        &\left| M_s(\|\phi(\bx)-\btheta_1\|^2,\dots,\|\phi(\bx)-\btheta_k\|^2) - M_{s^\prime}(\|\phi(\bx)-\btheta_1^\prime\|^2,\dots,\|\phi(\bx)-\btheta^\prime_k\|^2) \right| < \epsilon/4. \label{eq del}    
    \end{align}
    
    We write $\bTheta^\prime = \{\btheta^\prime_1,\dots,\btheta^\prime_k\}$, and now note that 
    \begin{align}
        | M_s(\bx,\bTheta)-M_{s_1}(\bx,\bTheta^\prime)|
        \,\le\, & | M_s(\bx,\bTheta)-M_{s_1}(\bx,\bTheta)| + | M_{s_1}(\bx,\bTheta^\prime)-M_{s_1}(\bx,\bTheta^\prime)| \nonumber \\
         \,\le\, & \epsilon/4 + \epsilon/4 = \epsilon/2 \label{eq del2}
    \end{align}
    The last inequality follows from \eqref{lab1} and \eqref{eq del}.
    Compactness further implies that $[s_1,s_0]$ and $\mathscr{C}(\phi(C))$ are totally bounded, so we may create two $\delta$-nets $N_\delta^{(1)}$ and $N_\delta^{(2)}$ of $[s_1,s_0]$ and  $\mathscr{C}(\phi(C))$, respectively. That is, $|N_\delta^{(1)}|,|N_\delta^{(2)}|< \infty$, and for all $s \in [s_1,s_0]$ and $\btheta \in \mathscr{C}(\phi(C))$, there exists $s^\prime \in N_\delta^{(1)}$ and $\btheta^\prime \in N_\delta^{(2)}$ such that $\|\btheta-\btheta^\prime\|<\delta$ and $|s-s^\prime| < \delta$.
    Now, choose \[ \G_\epsilon=\bigg\{\max\{M_{s^\prime}(\|\phi(\bx)-\btheta_1\|^2,\dots,\|\phi(\bx)-\btheta_k\|^2)\pm \epsilon/2,0\}: \btheta_1,\dots,\btheta_k \in N_\delta^{(2)},\, s \in N_\delta^{(1)}\cup\{s_1\}\bigg\}.\]
    For any $g \in \mathcal{G}$ and $\bTheta \in \mathscr{C}(\phi(C))^k$, if $s \in [s_1,s_0]$, 
    we may take 
    \begin{equation}\small
    \label{eq:otherwise}
        \dot{g}_{\bTheta}(\bx)=\left(\mathcal{M}_s(\bx,\bTheta^\prime)-\frac{\epsilon}{2}\right)_+; \,\bar{g}_{\bTheta}(\bx) =\mathcal{M}_s(\bx,\bTheta^\prime)+\frac{\epsilon}{2}
    \end{equation}
Otherwise, if $s<s_1$, we replace $M_s'$ by $M_{s_1}$ in the definitions of $\dot{g}_{\bTheta}(x), \, \bar{g}_{\bTheta}(x), $ in \eqref{eq:otherwise}.
    Here $\btheta_j^\prime \in N_\delta$ and $\|\btheta_j-\btheta_j^\prime\| < \delta$ for all $j=1,\dots,k$, and by the construction of $\bTheta^\prime=\{\btheta_1^\prime,\dots,\btheta^\prime_k\}$ in \eqref{eq del} and \eqref{eq del2}, it follows that $\dot{g} \leq g \leq \bar{g}$. \,
    It remains to show that $\int(\bar{g}-\dot{g})dP < \epsilon$. To see this, 
    \[ \int(\bar{g}-\dot{g})dP \,  = \,  \int \big( M_s(\bx,\bTheta^\prime)+\frac{\epsilon}{2}-\max\big\{M_s(\bx,\bTheta^\prime)-\frac{\epsilon}{2},0\big\}\big) dP   \le \quad  \epsilon \int dP \,\, = \,\, \epsilon.\]
\end{proof}
\begin{thm}\label{main theorem}
    (Strong Consistency) Under  A1 and A2, $\bTheta_{n,s} \overset{a.s.}{\to} \bTheta^\ast$ as $n \to \infty$ and $s \to -\infty$.
\end{thm}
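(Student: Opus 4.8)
The plan is to run a Pollard-style argmin-consistency argument, using Lemma \ref{uslln} and Theorem \ref{uniform} to control two separate sources of error---sampling ($P_n$ versus $P$) and annealing ($s$ finite versus $s=-\infty$)---and then to invoke the identifiability assumption A2 to transfer closeness of objective values into closeness of minimizers. Throughout, every relevant point lives in the compact set $\mathscr{C}(\phi(C))^k$: the empirical minimizers $\bTheta_{n,s}$ lie there by Theorem \ref{hull}, and the same convex-hull reasoning applied to the limiting $k$-means objective $\Psi(\cdot,P)$ places $\bTheta^\ast$ there as well. Since $\Psi(\cdot,P)$ is continuous on this compact set it attains its minimum, and A2 guarantees that the minimizer is well separated.

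First I would assemble a single uniform error bound. Write $\Psi_s(\bTheta,P)=\int \M_s(\bx,\bTheta)\,dP$ for the population power-$s$ objective, so that $\bTheta_{n,s}$ minimizes $\Psi_s(\cdot,P_n)$. Define $A_n=\sup_{s\le s_0}\sup_{\bTheta\in\mathscr{C}(\phi(C))^k}|\Psi_s(\bTheta,P_n)-\Psi_s(\bTheta,P)|$ and $B_s=\sup_{\bTheta\in\mathscr{C}(\phi(C))^k}|\Psi_s(\bTheta,P)-\Psi(\bTheta,P)|$. Lemma \ref{uslln} gives $A_n\to 0$ almost surely, while the uniform convergence $\M_s(\bx,\bTheta)\to\min_{1\le j\le k}\|\phi(\bx)-\btheta_j\|^2$ over the compact set $C\times\mathscr{C}(\phi(C))^k$ (Theorem \ref{uniform}) yields $B_s\to0$ after integrating against $P$. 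The triangle inequality then gives, for all $s\le s_0$, $\sup_{\bTheta}|\Psi_s(\bTheta,P_n)-\Psi(\bTheta,P)|\le A_n+B_s$, a bound that vanishes as $n\to\infty$ and $s\to-\infty$.

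Next I would carry out the consistency step. By minimality of $\bTheta_{n,s}$ together with the bound above,
\[
\Psi(\bTheta_{n,s},P)\,\le\,\Psi_s(\bTheta_{n,s},P_n)+A_n+B_s\,\le\,\Psi_s(\bTheta^\ast,P_n)+A_n+B_s\,\le\,\Psi(\bTheta^\ast,P)+2(A_n+B_s),
\]
so that $\Psi(\bTheta_{n,s},P)-\Psi(\bTheta^\ast,P)\le 2(A_n+B_s)\to 0$ almost surely. Fix any $r>0$; by A2 there is $\epsilon>0$ with $\Psi(\bTheta,P)>\Psi(\bTheta^\ast,P)+\epsilon$ for every $\bTheta\in\mathscr{C}(\phi(C))^k\setminus B(\bTheta^\ast,r)$. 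Once $n$ is large and $s$ sufficiently negative that $2(A_n+B_s)<\epsilon$, the displayed inequality forces $\bTheta_{n,s}\in B(\bTheta^\ast,r)$. As $r>0$ was arbitrary, $\bTheta_{n,s}\overset{a.s.}{\to}\bTheta^\ast$.

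I expect the main obstacle to be bookkeeping the joint limit rather than any genuinely new idea: the two error terms $A_n$ and $B_s$ must be driven to zero simultaneously, and one must verify that the USLLN supplies control that is uniform over the entire tail $\{s\le s_0\}$ (so that no single $s$ is privileged) while Theorem \ref{uniform} supplies the complementary uniformity in $\bTheta$ as $s\to-\infty$. The heavy lifting---the uniform strong law itself---is already discharged in Lemma \ref{uslln}; what remains is the standard template that uniform convergence of objectives plus a well-separated minimizer implies convergence of argmins, which only requires that both approximations be uniform on the common compact domain $\mathscr{C}(\phi(C))^k$.
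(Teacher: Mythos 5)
Your proposal is correct and follows essentially the same route as the paper: the chain $\Psi(\bTheta_{n,s},P)\le \Psi_s(\bTheta_{n,s},P_n)+A_n+B_s\le \Psi_s(\bTheta^\ast,P_n)+A_n+B_s\le \Psi(\bTheta^\ast,P)+2(A_n+B_s)$ is exactly the paper's decomposition into $\xi_1+\xi_2+\xi_3$ (annealing error, sampling error, and a minimality-plus-approximation term), with Lemma \ref{uslln} controlling the $P_n$-versus-$P$ gap, Theorem \ref{uniform} controlling the $s$-versus-$-\infty$ gap, and A2 converting objective-value closeness into argmin closeness. The only difference is presentational: you package the two errors as uniform suprema $A_n$ and $B_s$ applied twice, while the paper tracks explicit $\epsilon/6$ and $\epsilon/3$ budgets at the specific points $\bTheta_{n,s}$ and $\bTheta^\ast$.
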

 \begin{proof}
     We must show for arbitrarily small $r>0$, that the minimizer $\bTheta_{n,s}$ eventually lies inside the ball $B(\bTheta^\ast,r)$. From A2, it suffices to show that for all $\eta>0$, there exists $N_1>0$ and $N_2<0$ such that $n>N_1$ and $s<N_2$ implies that $\Psi(\bTheta_{n,s},P) - \Psi(\bTheta^\ast,P) \leq \epsilon$ 
     almost everywhere $[P]$.  
      We observe that \[\Psi(\bTheta_{n,s},P)-\Psi(\bTheta^\ast,P) = \xi_1 + \xi_2 + \xi_3, \quad \text{where} \] 
      \[ \xi_1  = \Psi(\bTheta_{n,s},P) - \int \M_s(\bx,\bTheta_{n,s})dP;\] \[\xi_2  = \int \M_s(\bx,\bTheta_{n,s})dP - \int \M_s(\bx,\bTheta_{n,s})dP_n; \] 
      \[\xi_3  = \int \M_s(\bx,\bTheta_{n,s})dP_n - \Psi(\bTheta^\ast,P).\] 
    We first choose $N_2<0$ such that if $s \le N_2$, then  \[\left|\min_{\btheta \in \bTheta} \|\phi(\bx)-\btheta\|^2 - \M_s(\bx,\bTheta)\right| < \epsilon/6 \] for all $\bx \in C$ and $\bTheta \in \phi(C)^k$. This implies that 
    \begin{align*}
        \xi_1 & = \Psi(\bTheta_{n,s},P) - \int \M_s(\bx,\bTheta_{n,s})dP \\
        & =\int \bigg(\min_{\btheta \in \bTheta} \|\phi(\bx)-\btheta\|^2 - \M_s(\bx,\bTheta_{n,s}) \bigg) dP\\
        & \leq \frac{\epsilon}{6} \int dP = \frac{\epsilon}{6}.
    \end{align*}
    Appealing to Lemma \ref{uslln}, we can choose $N_1$ large enough such that $n> N_1$ implies that $\xi_2< \epsilon/3$. To bound the third term $\xi_3$, we observe the following:
    \begingroup
    \allowdisplaybreaks
    \begin{align}
    \xi_3 & = \int \M_s(\bx,\bTheta_{n,s})dP_n - \Psi(\bTheta^\ast)  \nonumber\\
    &\le \int \M_s(\bx,\bTheta^\ast)dP_n - \Psi(\bTheta^\ast) \label{m1} \\ 
    &  \le \int \M_s(\bx,\bTheta^\ast)dP - \Psi(\bTheta^\ast) + \epsilon/6 \label{m2}\\ 
    &  \le \int \{\min_{\btheta \in \bTheta^\ast} \|\phi(\bx) - \btheta\|^2 + \epsilon/6 \} dP - \int \min_{\btheta \in \bTheta^\ast} \|\phi(\bx) - \btheta\|^2 dP +\epsilon/6 \, = \,  \epsilon/3 \label{m3} 
    \end{align}
    \endgroup
    Eq. \eqref{m1} holds since $\bTheta_{n,s}$ is the minimizer for $\int \M_s(\bx, \bTheta)dP$, and Eqs. \eqref{m2} and \eqref{m3} follow from Lemma \ref{uslln} and Theorem \ref{uniform}. Thus,  \[\Psi(\bTheta_{n,s},P)-\Psi(\bTheta^\ast,P) = \xi_1 + \xi_2 + \xi_3 \le \epsilon/6 + \epsilon / 3+ \epsilon/3< \epsilon. \] 
    \end{proof}

\section{Results and performance}\label{sec:results}
\begin{figure*}[ht!]
    \centering \hspace{-3pt}
    \includegraphics[height=0.275\linewidth,width=.345\linewidth]{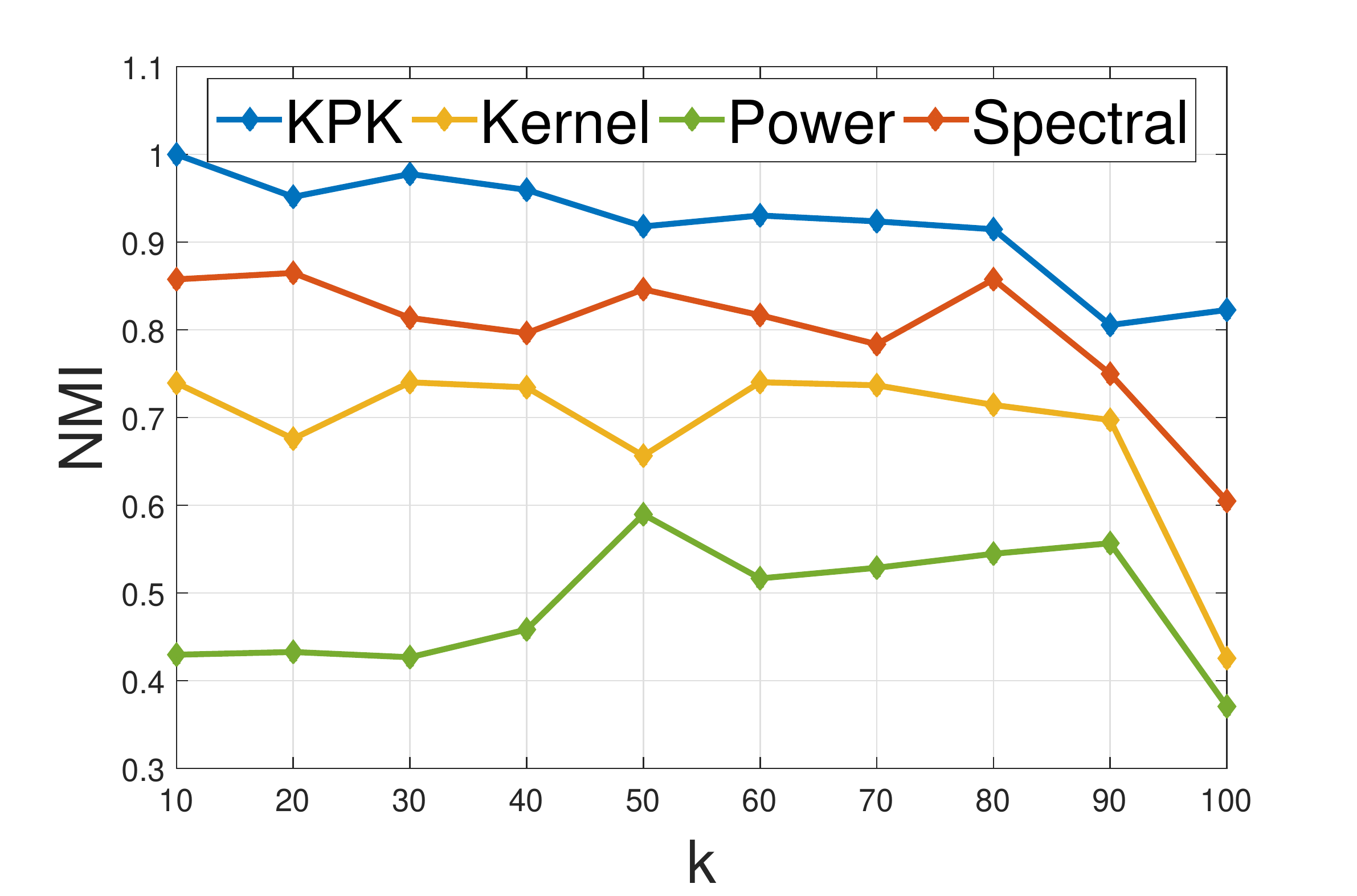}
    \hspace{-4pt}    \includegraphics[height=0.27\linewidth,width=.315\linewidth]{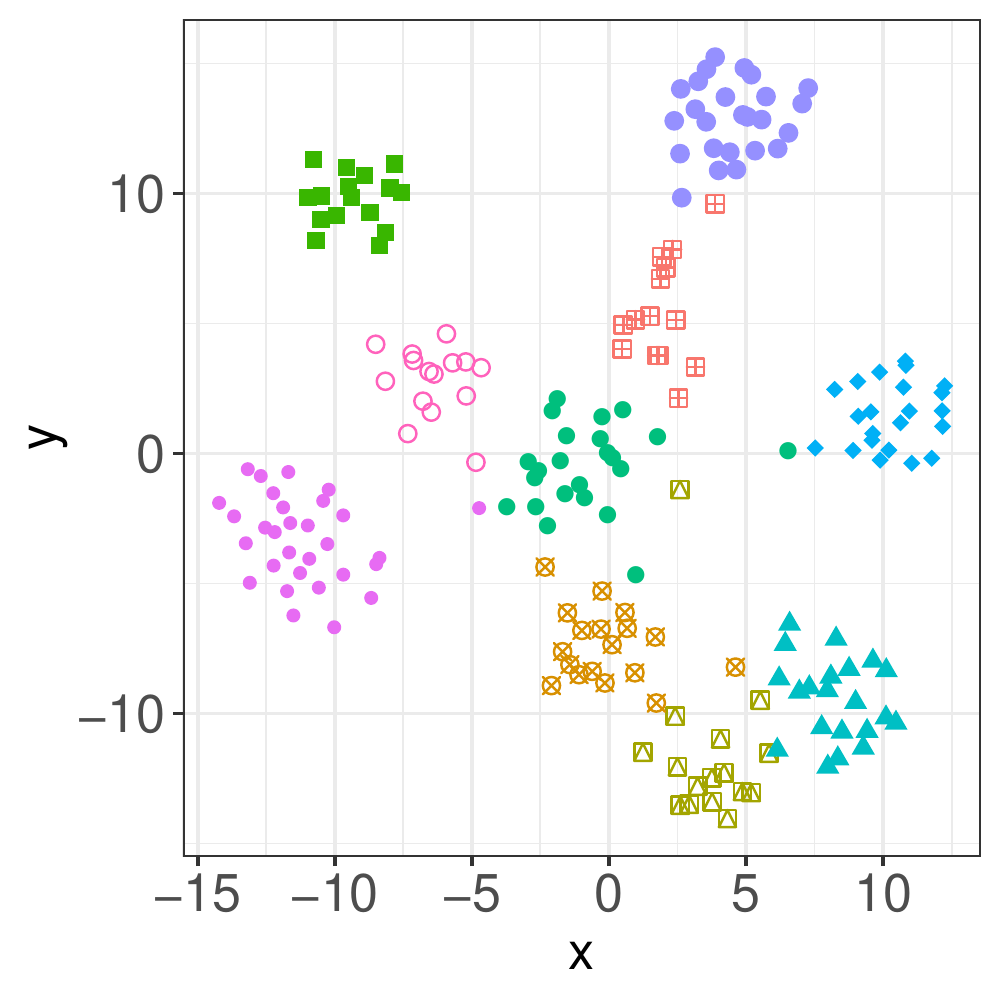} \hspace{-1pt}
    \includegraphics[height=0.27\linewidth,width=.315\linewidth]{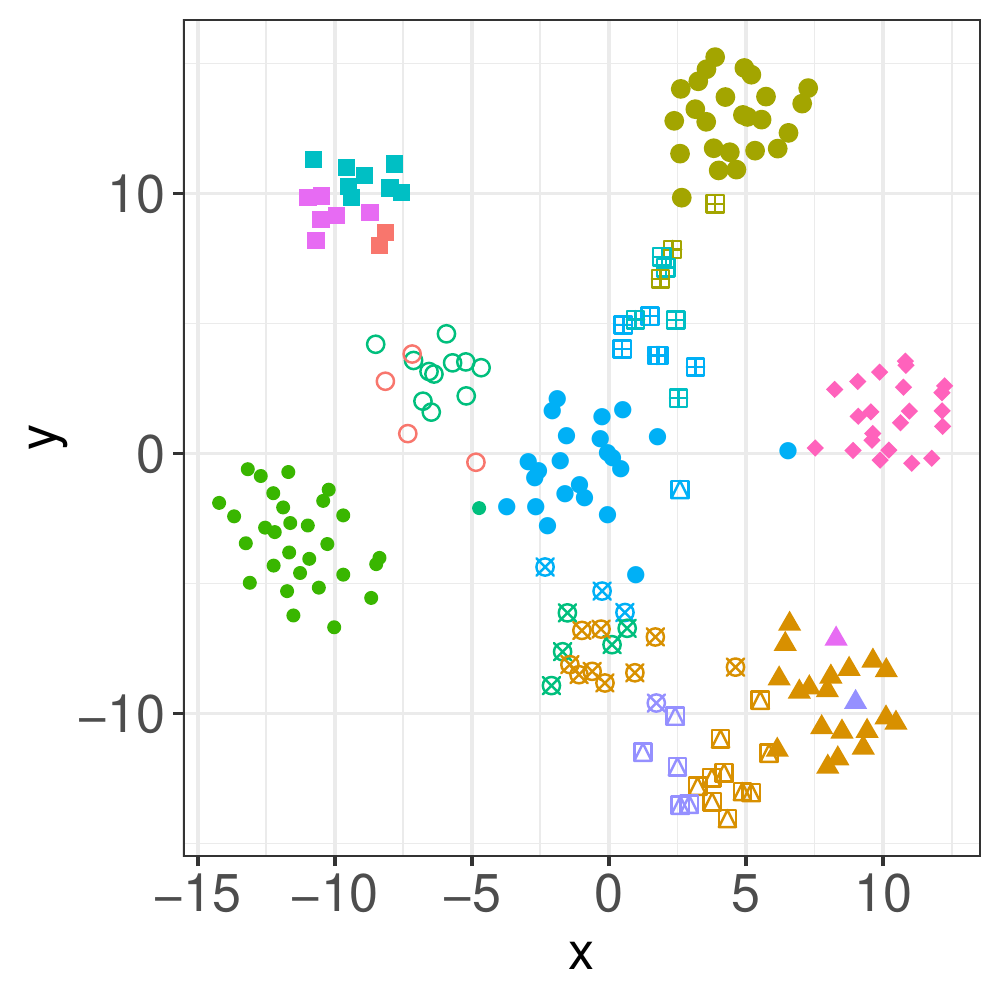}    \hspace{-1pt}
    \caption{Performance of peer algorithms in terms of average NMI values as $k$ varies from $10$ to $100$, and $t$-SNE plots color-coded with the partitions produced by KPK (middle) and kernel $k$-means (right) with $k=10$. }
    \label{fig_nmi}
\end{figure*}
\paragraph{Simulation study} While the motivating example in Section \ref{sec:intro} already shows that the proposed method successfully evades local minima on a classic simulation setup, we turn to a closer empirical analysis in a more difficult non-linear setting.
We draw $k$ true cluster centroids $\bmu_j$ uniformly along the surface of a unit sphere $S_{19}=\{\bx:\|\bx\|_2=1\} \subset \Real^{20}$, and assign ground truth labels $c_i$ to each observation $\bx_i$ uniformly. A point assigned to cluster $j$ is drawn from the von-Mises-Fisher distribution \citep{downs1972orientation} normalized to lie on $S_{19}$ with mean direction $\bmu_j$ and $\kappa=30$: that is,
\begin{align*}
 &   \bmu_1,\dots,\bmu_k \sim \text{Unif}(S_{19}); \quad c_i \sim \text{Unif}\{1, \ldots, k \}; \\
&     \bx_i|c_i \sim VMF(\bmu_{c_i},\kappa).
\end{align*}
We consider performance while varying the true number of clusters $k$ between $10$ and $100$, which increases the number of local optima. For each $k$, we generate $20$ datasets with $n=20$ observations each, and run the competing methods from $20$ matched initializations  until convergence. We use the Gaussian kernel $\mathcal{K}(\bx,\by)=\exp\{-\cos^{-1}(\bx^\top \by)/(2\sigma^2)\}$, with bandwidth parameter $\sigma=1$.


Our comparison will focus between the proposed method and kernel $k$-means (under same choice of kernel) \citep{girolami2002mercer}, power $k$-Means \citep{xu2019power} and spectral clustering \citep{ng2002spectral}. There are many variations that entail higher complexity and additional hyperparameters, while our method can be seen as a drop-in improvement of kernel $k$-means.  
We use Normalized Mutual Information (NMI) \citep{vinh2010information} to assess the partitioning obtained by each approach to the ground truth labels, whose value ranges between $1$ indicating perfect recovery and $0$.
Average NMI values are shown in Figure \ref{fig_nmi}; while all methods struggle as $k$ increases, it is clear that kernel power $k$-means outperforms peer algorithms while retaining their simplicity. A very similar trend can be observed for the average ARI values and is reported in the appendix. A $t$-SNE \citep{maaten2008visualizing} visualization of a simulated data with $k=10$, color-coded with the clusterings obtained by both KPK and kernel $k$-means are shown  in Figure \ref{fig_nmi}.    
\begin{table*}[ht]
     \caption{Average NMI values and average rank on real data; $+$ ($\approx$) indicates statistically significant (equivalent) result with respect to the best performing algorithm for that row.}
    \label{tab single}
    \centering
    \begin{tabular}{c c c c c}
    \hline
         Dataset & Kernel Power $k$-means & Kernel $k$-means & Power $k$-means & Spectral Clustering \\
         \hline
         Yale & \textbf{0.5921}(1) & 0.5199$^+$(2) & 0.1714$^+$(4) & 0.5241$^+$(3)\\
        JAFFE & \textbf{0.9278}(1) & 0.8501$^+$(4) & 0.8974$^\approx$(2) & 0.8752$^\approx$(3)\\
        TOX171 & 0.3328$^\approx$(2) &  0.1984$^+$(3) &  0.1760$^+$(4) &  \textbf{0.3552}(1)\\
        Seeds & \textbf{0.7502}(1) & 0.7247$^\approx$(3) & 0.7384$^\approx$(2) & 0.7239$^\approx$(4)\\
        Lung & \textbf{0.6539}(1)  &  0.5728$^+$(2) &  0.1945$^+$(4) &  0.5255$^+$(3) \\
        Isolet & \textbf{0.8466}(1) & 0.7694$^+$(3) & 0.7582$^+$(4) & 0.7882$^+$(2)\\
        Lung discrete & \textbf{0.8261}(1) & 0.5320$^+$(4) & 0.6967$^+$(3) & 0.7340$^+$(2)\\
        COIL20 & \textbf{0.8082}(1) & 0.6882$^+$(4) & 0.7698$^\approx$(2) & 0.7083$^+$(3)\\
        GLIOMA & \textbf{0.6297}(1) & 0.4085$^+$(3) &  0.5931$^\approx$(2) & 0.2509$^+$(4) \\
         \hline
         Average Rank & \textbf{1.11} & 3.11 & 3 &  2.78\\
         \hline
    \end{tabular}
\end{table*}
\begin{table*}[ht]
            \caption{Average NMI values and (ranks) comparing single- and multi-kernel methods.}
        \label{tab_multi}
        \centering
    \begin{tabular}{c c c c c c c}
   \hline
      Datasets & Kernel $k$-means & Spectral & MKKM & RKKM & RMKKM & MKPK\\
      \hline
       Yale  & 0.4207 (6) & 0.4479 (4) &  0.5006 (3) &  0.4287 (5)  & \textbf{0.5558} (1) & 0.5482 (2)\\
       Jaffe & 0.7148 (5) & 0.5935 (6) & 0.7979 (3) & 0.7401 (4)  & 0.8937 (2) & \textbf{0.9247} (1)\\
       ORL &  0.6336 (6) &  0.6674 (4) & 0.6886 (3) & 0.6391 (5)  & 0.7483 (2) &\textbf{ 0.7876} (1)\\
       COIL20 & 0.6357 (5) & 0.5434 (6) & 0.7064 (3) & 0.6370 (4) &  0.7734 (2) & \textbf{0.7763} (1)\\
    \hline 
    Average Rank & 6.5 & 5 & 3 & 4.5 & 1.75 & \textbf{1.25} \\
    \hline
    \end{tabular}
\end{table*}\paragraph{Real data analysis}
We begin by studying performance on classic clustering benchmark datasets. The datasets JAFFE and Seeds are collected from \cite{670949} and UCI machine learning repository \citep{Dua:2019} respectively. The rest are collected from the ASU feature selection repository\footnote{\url{http://featureselection.asu.edu/datasets.php}} \citep{li2018feature}. 
The average NMI values obtained for all the peer algorithms using a single kernel are summarized in Table \ref{tab single}.

All data are centered and scaled before the experiment. The Gaussian kernel is used in all the experiments. The bandwidth parameter $\sigma$ for the Gaussian kernel is chosen as $ \sqrt{\sum_{i=1}^n \sum_{j=1}^n \|\bx_i-\bx_j\|_2^2)/n(n-1)}$ following \cite{calandriello2018statistical,wang2019scalable}. On each dataset, the same kernel is used across all methods where applicable. The value of $s_0$ and $\eta$ in Power $k$-Means and Kernel Power $k$-Means is taken to be $-1$ and $1.04$ respectively, updating $s$ every $5$ iterations. All algorithms are iterated until convergence and repeated over $20$ matched random initializations; an analogous study with $k$-means++ initialization appears in the appendix. We report mean performance and assess the statistical significance of observed differences via Wilcoxon's signed-rank test \citep{wasserman2006all}. Results appear in Table \ref{tab single}, where  $(+)$ 
indicates the difference from the best performer on a given dataset is significant at the 5\% level.  It is clear from the average NMI values that kernel power $k$-means outperforms peer algorithms in almost all cases, often with statistical significance.
\paragraph{Multi-view data} We next examine data that have been considered in past multi-view clustering studies \citep{du2015robust}. We employ $12$ different kernels for our experiments on multi-view datasets as in \cite{du2015robust}: $7$ Gaussian kernels, $4$ polynomial kernels, and one cosine kernel, choosing the parameters of the kernel functions following  \cite{du2015robust}. We normalize all the kernels by $\mathcal{K}(\bx_i,\bx_j)=\mathcal{K}(\bx_i,\bx_j)/\sqrt{\mathcal{K}(\bx_i,\bx_i)\mathcal{K}(\bx_j,\bx_j)}$ and rescale in $[0,1]$. 
We compare our proposed Multi-Kernel Power $k$-Means (MKPK) in the same multi-kernel setup with Kernel $k$-Means, Spectral Clustering \citep{ng2002spectral}, Multiple Kernel $k$-Means (MKKM) \citep{huang2011multiple}, Robust Kernel $k$-Means (RKKM) \citep{du2015robust} and Robust Multiple Kernel $k$-Means (RMKKM) \cite{du2015robust}. The average NMI values obtained for $20$ repetitions for each of the peer algorithms (the results for MKKM, RKKM, and RMKKM are quoted from \cite{du2015robust}) are summarized in Table \ref{tab_multi}, which showcases the promise of MKPK. 

It should be noted that our method outperforms competing methods despite maintaining a simpler update scheme and computational complexity of only
$\mathcal{O}(n^2kL)$ per iteration, much lower than the $\mathcal{O}((n^3+n^2+n)L+(n^2+n)k$ cost of RMKKM and comparable to the $\mathcal{O}(n^2kL)$ complexity of MKKM. As a drop-in replacement for kernel $k$-means, this cost can be further reduced using existing acceleration methods for the computation of $\mathcal{K}$.

\section{Discussion}\label{sec:discussion}
This paper utilizes the continuum of power means to define and solve well-behaved optimization problems that approach the original kernel $k$-means objective. We show that kernel power $k$-means elegantly brings this annealing scheme to bear via MM, bridging recent developments that successfully combat local minima in the original feature space to non-linear classification tasks. We extend existing theoretical results and additionally derive novel large-sample properties of our method for kernel and multi-kernel setups. We emphasize the simplicity and low complexity of our approach; it can be seen as a drop-in replacement for improving kernel $k$-means. Our empirical studies show that it consistently outperforms standard kernel $k$-means and comparable variants.

Several directions remain open. A thorough theoretical investigation of annealing rates is lacking, and characterizing optimal schedules to decrease $s$ toward $-\infty$ is both of methodological interest and practical relevance. Second, though power means objectives are non-linear and do not directly yield equivalent trace problem formulations, future work may explore this direction to seek explicit connections between the proposed method and approaches such as spectral clustering that have been established for standard kernel $k$-means. The broader idea that smoothing out local minima can significantly improve performance is not tied to the choice of Euclidean distance in the loss function. Thus, extensions to other classes of divergences are warranted and remain fruitful avenues for future work.


\clearpage
\appendix
\begin{center} {\LARGE \bf Appendix }
\end{center}
\section{Proofs from Section 3}
\subsection{Theorem 1}
\begin{proof}
Let $P_{\mathscr{C}(\phi(C))} (\btheta)$ denote the projection of $\btheta$ onto $\mathscr{C}(\phi(C))$. Now for any $\bv \in \mathscr{C}(\phi(C))$, we use the obtuse angle condition to obtain, $\langle \btheta-P_{\mathscr{C}(\phi(C))} (\btheta), \bv-P_{\mathscr{C}(\phi(C))} (\btheta) \rangle \leq 0$. Since $\bx_i \in \phi(C)$, we obtain,
\begin{align*}
    \|\bx_i-\btheta_j\|^2 & = \|\bx_i-P_{\phi(C)} (\btheta_j)\|^2 + \|P_{\mathscr{C}(\phi(C))} (\btheta_j)-\btheta_j\|^2 -2 \langle \btheta-P_{\mathscr{C}(\phi(C))} (\btheta_j), \bx_i-P_{\mathscr{C}(\phi(C))} (\btheta_j) \rangle\\
    & \geq \|\bx_i-P_{\mathscr{C}(\phi(C))} (\btheta_j)\|^2 + \|P_{\mathscr{C}(\phi(C))} (\btheta_j)-\btheta_j\|^2.
\end{align*}
Now since, $M_s(\cdot)$ is an increasing function in each of its argument, if we replace $\btheta_j$ by $P_{\mathscr{C}(\phi(C))} (\btheta_j)$ in $M_s(\|\bx_i-\btheta_1\|^2,\dots,\|\bx_i-\btheta_k\|^2)$, the objective function value doesn't go up. Thus we can effectively restrict our attention to $\mathscr{C}(\phi(C))^k$. Now since the function $f_s(\cdot)$ is continuous on the compact set $\mathscr{C}(\phi(C))^k $, it attains its minimum on $\mathscr{C}(\phi(C))^k$. Thus, $\bTheta^* \in \mathscr{C}(\phi(C))^k$. 
\end{proof}

\subsection{Theorem 2}
\begin{proof}
For any $\bTheta \in \phi(C)^k$, $f_{s_m}(\bTheta)$ decreases monotonically to $f_{-\infty}(\bTheta)$ (this is due to the power mean inequality). Since $\phi(C)^k$ is compact, the result follows immediately upon applying Dini's theorem from real analysis \cite{apostol1964mathematical}.
\end{proof}

\section{MM for Multi-kernel Setting}
The majorization is supplied by the following:
\begin{align*}
f_s(\bTheta,\balpha)  \,\leq\, & f_s(\bTheta^{(m)},\balpha^{(m)})-\lambda \sum_{l=1}^L \alpha_l^{(m)} \log \alpha_l^{(m)} -\sum_{i=1}^n\sum_{j=1}^k w^{(m)}_{ij} \sum_{l=1}^L \alpha_l^{(m)} \|\phi_l(\bx_i)-\btheta_{j,l}^{(m)}\|^2\\
&+\sum_{i=1}^n\sum_{j=1}^k w^{(m)}_{ij}\sum_{l=1}^L \alpha_l \|\phi_l(\bx_i)-\btheta_{j,l}\|^2 + \lambda \sum_{l=1}^L \alpha_l \log \alpha_l , \qquad \text{where}
\end{align*}
\[w^{(m)}_{ij}=\frac{1}{k} \big(\sum_{l=1}^L \alpha_l^{(m)} \|\phi_l(\bx_i)-\btheta_{j,l}^{(m)}\|\big)^{2(s-1)} \bigg(\frac{1}{k}\sum_{t=1}^k\big(\sum_{l=1}^L \alpha_l^{(m)}\|\phi_l(\bx_i)-\btheta_{t,l}^{(m)}\|\big)^{2s}\bigg)^{(1/s-1)} . \]

\section{Theorem Generalizations to Multi-kernel Setting}
\begin{thm}\label{hull 2}
Assume $\phi : C \longrightarrow \mathcal{H}$ to be a function from C to some Hilbert space $\mathcal{H}$. Let $\bTheta_{n,s}$ be the minimizer of $f_s(\bTheta)$, $s \leq 1$ . Then $\bTheta_{n,s}$ lies in the compact Cartesian product $(\mathscr{C}(\phi_1(C)) \times \dots \times \mathscr{C}(\phi_L(C)))^k$.
\end{thm}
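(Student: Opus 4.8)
The plan is to mirror the projection argument used for Theorem \ref{hull}, now applied block-by-block across the $L$ kernels. The crucial observation is that in the separated objective \eqref{mkpk2} each block $\btheta_{j,l}$ of a centroid enters only through the term $\|\phi_l(\bx_i)-\btheta_{j,l}\|^2$, which appears with a \emph{nonnegative} weight $\alpha_l$ inside the argument of the increasing function $M_s$; the entropy penalty $\lambda\sum_l \alpha_l \log \alpha_l$ is independent of $\bTheta$ and plays no role in this part of the argument.

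First I would fix $\balpha$ (equivalently, evaluate at the $\balpha$ belonging to a joint minimizer) and, for each pair $(j,l)$, replace $\btheta_{j,l}$ by its projection $P_{\mathscr{C}(\phi_l(C))}(\btheta_{j,l})$ onto the closed convex hull of $\phi_l(C)$. Exactly as in the proof of Theorem \ref{hull}, the obtuse-angle condition together with $\phi_l(\bx_i)\in\phi_l(C)\subseteq\mathscr{C}(\phi_l(C))$ yields the Pythagorean-type bound
\[
\|\phi_l(\bx_i)-\btheta_{j,l}\|^2 \;\ge\; \big\|\phi_l(\bx_i)-P_{\mathscr{C}(\phi_l(C))}(\btheta_{j,l})\big\|^2 + \big\|P_{\mathscr{C}(\phi_l(C))}(\btheta_{j,l})-\btheta_{j,l}\big\|^2 .
\]
Next I would combine these block-wise inequalities. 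Since every $\alpha_l\ge 0$, summing against the weights preserves the inequality, so $\sum_{l}\alpha_l\|\phi_l(\bx_i)-\btheta_{j,l}\|^2$ does not increase under projection; monotonicity of $M_s$ in each of its arguments then shows that the $M_s$ term, and hence the entire objective (the entropy part being unchanged), does not increase. Consequently we may restrict attention to centroids whose every block lies in the corresponding convex hull, i.e.\ to $\big(\mathscr{C}(\phi_1(C))\times\cdots\times\mathscr{C}(\phi_L(C))\big)^k$.

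Finally, to conclude that the minimum is actually attained there, I would invoke compactness: each $\phi_l(C)$ is the continuous image of the compact set $C$ and hence compact, so $\mathscr{C}(\phi_l(C))$ is compact by Theorem 5.35 of \cite{aliprantis1986border}; a finite Cartesian product of compact sets is compact, and $f_s$ is continuous on it, so a minimizer exists within the set. The only point requiring care, and the single place the multi-kernel structure genuinely enters, is the use of $\alpha_l\ge 0$ to push the block-wise projection inequalities through the weighted sum. The degenerate case $\alpha_l=0$ is harmless: there $\btheta_{j,l}$ does not affect the objective and may be taken in $\mathscr{C}(\phi_l(C))$ without loss. Beyond this bookkeeping I do not anticipate any real obstacle, since every ingredient reduces to a per-kernel copy of the single-kernel argument.
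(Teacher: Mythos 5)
Your proposal is correct and follows essentially the same route as the paper's own proof: block-wise projection of each $\btheta_{j,l}$ onto $\mathscr{C}(\phi_l(C))$ via the obtuse-angle (Pythagorean) inequality, nonnegativity of the weights $\alpha_l$ to pass the inequality through the weighted sum, monotonicity of $M_s$ in each argument, and finally compactness of $(\mathscr{C}(\phi_1(C))\times\cdots\times\mathscr{C}(\phi_L(C)))^k\times[0,1]^L$ plus continuity of $f_s$ to conclude the minimizer lies in this set. Your explicit handling of the degenerate case $\alpha_l=0$ and the remark that the entropy term is unaffected are minor bookkeeping points the paper leaves implicit, but they do not constitute a different argument.
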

\begin{proof}
    Let $P_l(\btheta)$ be the projection of $\btheta$ onto $\phi_l(C)$, $\forall l=1,\dots,L$. Observe that for any $\bv \in \mathscr{C}(\phi(C))$, $\langle \btheta - P_l(\btheta), \bv- P_l(\btheta)  \rangle \le 0$. We observe the following:
    \begin{align*}
        \sum_{l=1}^L \alpha_l \|\bx_i - \btheta_{j,l}\|^2 & = \sum_{l=1}^L  \alpha_l \bigg[ \|\bx_i - P_l(\btheta_{j,l})\|^2 + \|P_l(\btheta_{j,l}) - \btheta_{j,l}\|^2 -2 \langle \btheta_{j,l} - P_l(\btheta_{j,l}), \bx_i- P_l(\btheta_{j,l})  \rangle \bigg]\\
        & \ge \sum_{l=1}^L  \alpha_l \bigg[ \|\bx_i - P_l(\btheta_{j,l})\|^2 + \|P_l(\btheta_{j,l})- \btheta_{j,l}\|^2\bigg].
    \end{align*}
    Now, since $M_s(\cdot)$ is an increasing function in each of its arguments, replacing $\btheta_{j,l}$ by $P_l(\btheta_{j,l})$ in $M_s( \sum_{l=1}^L \alpha_l \|\bx_i - \btheta_{1,l}\|^2 ,\dots, \sum_{l=1}^L \alpha_l \|\bx_i - \btheta_{k,l}\|^2 )$ does not increase the objective functional value. Thus, we can effectively restrict our search for $(\bTheta_{n,s}, \balpha)$ to the compact set $(\mathscr{C}(\phi_1(C)) \times \dots \times \mathscr{C}(\phi_L(C)))^k \times [0,1]^L$. Since $f_s(\cdot,\cdot)$ is a continuous function on the compact set, $(\mathscr{C}(\phi_1(C)) \times \dots \times \mathscr{C}(\phi_L(C)))^k \times [0,1]^L$, it attains its minima within that compact set. Thus, $\bTheta_{n,s} \in (\mathscr{C}(\phi_1(C)) \times \dots \times \mathscr{C}(\phi_L(C)))^k$. 
\end{proof}

\begin{thm}
\label{uniform}
For any decreasing sequence $\{s_m\}_{m=1}^\infty$ such that $s_1 \leq 1$ and $s_m \to -\infty$, the functions $f_{s_m}(\bTheta,\balpha)$ converge uniformly to $f_{-\infty}(\bTheta,\balpha)$ on $(\mathscr{C}(\phi_1(C)) \times \dots \times \mathscr{C}(\phi_L(C)))^k$.
\end{thm}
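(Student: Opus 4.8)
The plan is to follow exactly the strategy used for the single-kernel uniform convergence result and reduce the claim to Dini's theorem on a compact domain. The essential simplification is that the entropy penalty $\lambda \sum_{l=1}^L \alpha_l \log \alpha_l$ does not depend on $s$: it contributes identically to every $f_{s_m}(\bTheta,\balpha)$ and to $f_{-\infty}(\bTheta,\balpha)$, so it plays no role in either the monotonicity or the limiting behaviour of the sequence and may simply be carried along as a fixed continuous summand. I would first establish pointwise monotone convergence: fixing $(\bTheta,\balpha)$ and, for each datum $\bx_i$, setting $y_j := \sum_{l=1}^L \alpha_l \|\phi_l(\bx_i)-\btheta_{j,l}\|^2$ for $j=1,\dots,k$, the power mean inequality $M_s(\by)\le M_t(\by)$ for $s\le t$ shows that $M_{s_m}(y_1,\dots,y_k)$ is nonincreasing along the decreasing sequence $\{s_m\}$, while \eqref{eq:limit 1} gives $M_{s_m}(y_1,\dots,y_k)\to \min_{1\le j\le k} y_j$. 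Summing over $i=1,\dots,n$ and adding the fixed entropy term shows that $f_{s_m}(\bTheta,\balpha)$ decreases monotonically to $f_{-\infty}(\bTheta,\balpha)$ at every point.

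Next I would record compactness of the relevant domain. Each $\phi_l(C)$ is the continuous image of the compact set $C$ and hence compact, so every closed convex hull $\mathscr{C}(\phi_l(C))$ is compact in $(\Hi_l,\|\cdot\|)$ by the same appeal to Theorem 5.35 of \cite{aliprantis1986border} used in Lemma \ref{uslln}; the probability simplex $\{\balpha : \alpha_l\ge 0,\ \sum_l \alpha_l = 1\}$ is also compact, and a finite product of compact sets is compact. Thus $f_{s_m}$ and $f_{-\infty}$ are being compared on a compact set.

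Finally I would verify continuity so that Dini's theorem applies. For finite $s_m$ the map $(\bTheta,\balpha)\mapsto M_{s_m}(y_1,\dots,y_k)$ is continuous, since each $y_j$ depends continuously on $(\bTheta,\balpha)$ (the norms are continuous and each $\alpha_l$ enters linearly) and $M_{s_m}$ is continuous in its arguments; the limit $f_{-\infty}$ is likewise continuous because $\min$ is continuous and, with the convention $0\log 0 = 0$, the entropy term extends continuously to the boundary of the simplex. A finite sum of continuous functions being continuous, every $f_{s_m}$ and the limit $f_{-\infty}$ are continuous on the compact domain, and Dini's theorem upgrades the monotone pointwise convergence to uniform convergence, completing the argument.

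I expect the only genuinely delicate point to be the continuity of $f_{-\infty}$ at the boundary of the simplex, where one must invoke $x\log x \to 0$ as $x\to 0^+$ to keep the entropy summand continuous and thereby license Dini's theorem; every other step is a direct transcription of the single-kernel proof, with the sum over kernels $\sum_{l=1}^L \alpha_l \|\phi_l(\bx_i)-\btheta_{j,l}\|^2$ playing the role formerly played by $\|\phi(\bx_i)-\btheta_j\|^2$.
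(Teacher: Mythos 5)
Your proof is correct and takes essentially the same route as the paper's: pointwise monotone convergence of $f_{s_m}(\bTheta,\balpha)$ to $f_{-\infty}(\bTheta,\balpha)$ via the power mean inequality, compactness of the product of $(\mathscr{C}(\phi_1(C)) \times \dots \times \mathscr{C}(\phi_L(C)))^k$ with the (compact) set of admissible $\balpha$, and then Dini's theorem. The additional details you supply---that the $s$-independent entropy term rides along harmlessly, and that continuity of each $f_{s_m}$ and of the limit (including $0\log 0=0$ at the simplex boundary) must be checked before invoking Dini---are precisely the hypotheses the paper's terse proof leaves implicit, so your write-up is a more careful rendering of the same argument rather than a different one.
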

\begin{proof}
for any $\bTheta \in (\mathscr{C}(\phi_1(C)) \times \dots \times \mathscr{C}(\phi_L(C)))^k$ and $\balpha \in [0,1]^L$, $f_{s_m}(\bTheta,\balpha)$ decreases monotonically to $f_{-\infty}(\bTheta,\balpha)$ as $m \to \infty$. Since $(\mathscr{C}(\phi_1(C)) \times \dots \times \mathscr{C}(\phi_L(C)))^k \times [0,1]^L$ is compact, appealing to Dini's Theorem \cite{apostol1964mathematical}, the result follows.
\end{proof}
For notational simplicity, let,  $\mathcal{M}_s(\bx,\bTheta,\balpha)=M_s( \sum_{l=1}^L \alpha_l \|\bx_i - \btheta_{1,l}\|^2 ,\dots, \sum_{l=1}^L \alpha_l \|\bx_i - \btheta_{k,l}\|^2 )$.
\begin{lem}\label{uslln 2}
(Uniform SLLN) Let $g_{\bTheta}(\bx)=\M_s(\bx,\bTheta,\balpha)$ and $\mathcal{G}=\{g_{\bTheta,\balpha}: \bTheta \in (\mathscr{C}(\phi_1(C)) \times \dots \times \mathscr{C}(\phi_L(C)))^k, \balpha \in [0,1]^L\}$. Then $\sup_{g \in \mathcal{G}}|\int g dP_n -\int g dP| \to 0$, almost everywhere $[P]$.
\end{lem}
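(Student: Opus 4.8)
The plan is to reprise the bracketing (Glivenko--Cantelli) strategy used in the proof of Lemma \ref{uslln}, now carrying the extra kernel-weight coordinate $\balpha$ through the argument while holding $s$ fixed. As in the single-kernel case, it suffices to exhibit for every $\epsilon>0$ a finite \emph{$\epsilon$-bracketing} $\G_\epsilon \subset \G$: a finite collection of functions such that every $g \in \G$ admits $\dot g, \bar g \in \G_\epsilon$ with $\dot g \le g \le \bar g$ and $\int(\bar g - \dot g)\, dP < \epsilon$. Granting this, for any such $g$ one has $\int g\, dP_n - \int g\, dP \le (\int \bar g\, dP_n - \int \bar g\, dP) + \epsilon$, and symmetrically from below using $\dot g$; applying the ordinary strong law of large numbers to the finitely many (bounded, hence integrable) bracketing functions forces $\sup_{g\in\G}|\int g\, dP_n - \int g\, dP| \le \epsilon$ eventually, almost surely. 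Letting $\epsilon \downarrow 0$ along a countable sequence gives the claim.

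To build the brackets I would first record compactness of the index set exactly as before. Each $\phi_l(C)$ is the continuous image of the compact set $C$, hence compact in $(\Hi_l,\|\cdot\|)$, and since $\Hi_l$ is locally convex and completely metrizable, its closed convex hull $\mathscr{C}(\phi_l(C))$ is again compact (Theorem 5.35 of \cite{aliprantis1986border}). Consequently the domain $D := C \times (\mathscr{C}(\phi_1(C)) \times \dots \times \mathscr{C}(\phi_L(C)))^k \times [0,1]^L$ is compact. The map $(\bx,\bTheta,\balpha) \mapsto \M_s(\bx,\bTheta,\balpha)$ is continuous on $D$: each squared distance $\|\phi_l(\bx)-\btheta_{j,l}\|^2$ is continuous, the convex combination $\sum_{l=1}^L \alpha_l \|\phi_l(\bx)-\btheta_{j,l}\|^2$ is jointly continuous in $(\balpha,\bTheta,\bx)$, and the fixed power mean $M_s$ is continuous on $[0,\infty)^k$. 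By the Heine--Cantor theorem it is therefore \emph{uniformly} continuous on $D$.

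Uniform continuity then supplies, for each $\epsilon>0$, a $\delta>0$ such that $\|\btheta_{j,l}-\btheta'_{j,l}\|<\delta$ for all $j,l$ together with $\|\balpha-\balpha'\|<\delta$ forces $|\M_s(\bx,\bTheta,\balpha)-\M_s(\bx,\bTheta',\balpha')| < \epsilon/2$ \emph{uniformly in} $\bx \in C$. Total boundedness of the compact factors yields finite $\delta$-nets for each $\mathscr{C}(\phi_l(C))$ and for $[0,1]^L$; their product gives finitely many representatives $(\bTheta',\balpha')$. For each representative set $\bar g = \M_s(\cdot,\bTheta',\balpha') + \epsilon/2$ and $\dot g = (\M_s(\cdot,\bTheta',\balpha') - \epsilon/2)_+$, and take $\G_\epsilon$ to be this finite family. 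Any $g = g_{\bTheta,\balpha}$ is within $\delta$ of some representative, so the uniform bound gives $\dot g \le g \le \bar g$, and $\int(\bar g - \dot g)\, dP \le \epsilon \int dP = \epsilon$, which completes the bracketing.

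The main obstacle to watch for is the joint continuity of $\M_s$ at configurations where $\phi_l(\bx)=\btheta_{j,l}$: for $s<0$ an argument of $M_s$ vanishes and the raw formula is singular. I would dispatch this exactly as in the single-kernel setting, invoking the continuous extension of $M_s$ that equals $0$ whenever any argument is $0$, so that continuity on all of $[0,\infty)^k$ is preserved. The only genuinely new ingredient relative to Lemma \ref{uslln} is the weight coordinate $\balpha$, but since it ranges over the compact cube $[0,1]^L$ and enters solely through the jointly continuous combination $\sum_{l=1}^L \alpha_l\|\cdot\|^2$, it is absorbed into the net-and-bracket construction without difficulty. Should one also want uniformity over $s \le s_0$ as in Lemma \ref{uslln}, the same device applies verbatim: peel off the tail $s \le s_1$ using the uniform convergence of Theorem \ref{uniform}, and cover the compact interval $[s_1,s_0]$ with an additional finite net.
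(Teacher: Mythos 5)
Your proposal follows essentially the same route as the paper's own proof: reduce to a finite $\epsilon$-bracketing of $\mathcal{G}$, use compactness of $C \times (\mathscr{C}(\phi_1(C)) \times \dots \times \mathscr{C}(\phi_L(C)))^k \times [0,1]^L$ (via Theorem 5.35 of \cite{aliprantis1986border}) and Heine--Cantor uniform continuity of $\M_s$, build finite $\delta$-nets for the centroid hulls and for $[0,1]^L$, and take brackets $\M_s(\cdot,\bTheta',\balpha') \pm \epsilon/2$ (truncated at $0$ below). Your added touches --- spelling out why bracketing plus the ordinary SLLN yields the uniform conclusion, and noting the continuous extension of $M_s$ when an argument vanishes for $s<0$ --- are refinements of details the paper leaves implicit, not a different argument.
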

\begin{proof}
It is enough to show that for any $\epsilon>0$, there exists $\mathcal{G}_\epsilon \subset \mathcal{G}$ such that $|\G_\epsilon|<\infty$ and for all $g \in \mathcal{G}$, there exist $\dot{g},\bar{g} \in \G_\epsilon$ with $\dot{g} \leq g \leq \bar{g}$ such that $\int(\bar{g}-\dot{g})dP < \epsilon$.
    
    We begin by observing that  $\mathcal{M}_s(\cdot,\cdot,\cdot)$ is a continuous function on the compact set $C \times (\mathscr{C}(\phi_1(C)) \times \dots \times \mathscr{C}(\phi_L(C)))^k \times [0,1]^L$, it is uniformly continuous by the Heine-Cantor theorem \cite{apostol1964mathematical}. 
    This implies that for any $\epsilon > 0$, we can choose $\delta_1$ and $\delta_2$, small enough such that  such that $\|\btheta_{j,l}-\btheta_{j,l}^\prime\|< \delta_1$ for all $j=1,\dots,k$, $l=1,\dots,L$ and $\|\balpha-\balpha\|_2 < \delta_2$, we have
    \begin{equation}
    \label{sap1}
    | \mathcal{M}_s(\bx,\bTheta,\balpha) - \mathcal{M}_s(\bx,\bTheta^\prime,\balpha^\prime) | < \frac{\epsilon}{2}.
    \end{equation}
    Compactness further implies that $\phi_l(C)$ is totally bounded for all $l=1,\dots,L$, so we may create a $\delta_1$-net $N_{\delta_1}^{(l)}$ of $\mathscr{C}(\phi_l(C))$. That is, $|N_{\delta_1}^{(l)}|< \infty$, and for all $\btheta \in \mathscr{C}(\phi_l(C))$ there exists $\btheta^\prime \in N_{\delta_1}^{(l)}$ such that $\|\btheta-\btheta^\prime\|<\delta_1$. Similarly we construct a $\delta_2$ net of compact set $[0,1]^L$, $S_{\delta_2}$. This means that  for all $\balpha \in [0,1]^L$, there exists $\balpha^\prime \in S_{\delta_2}$ such that $\|\balpha-\balpha^\prime\|<\delta_2$.
    Now, choose 
    $\G_\epsilon=\bigg\{\max\{\mathcal{M}_s(\bx,\bTheta,\balpha)+\pm \epsilon/2,0\}: \btheta_{j,1} \in N_{\delta_1}^{(1)},\dots,\btheta_{j,l} \in N_{\delta_1}^{(l)} \text{ for all } j=1,\dots,k \text{ and } \balpha \in S_{\delta_2} \bigg\}$. Now for any $\bTheta \in (\mathscr{C}(\phi_1(C)) \times \dots \times \mathscr{C}(\phi_L(C)))^k$ and $\balpha \in [0,1]^L$, let
    \begin{align*}
        \dot{g}_{\bTheta}(x) & =\max\{\mathcal{M}_s(\bx,\bTheta^\prime,\balpha^\prime)-\epsilon/2\}\\
        \bar{g}_{\bTheta}(x) & =\mathcal{M}_s(\bx,\bTheta^\prime,\balpha^\prime)+\frac{\epsilon}{2}.
    \end{align*}
    Here $\btheta_{j,l}^\prime \in N_{\delta_1}^{(l)}$ and $\|\btheta_{j,l}-\btheta_{j,l}^\prime\| < \delta_1$ for all $j=1,\dots,k$, $l=1,\dots,L$ and $\|\balpha-\balpha^\prime\|_2 < \delta_2$. By the construction of $\bTheta^\prime=\{\btheta_1^\prime,\dots,\btheta^\prime_k\}$ from Equation \eqref{sap1}, it follows that $\dot{g} \leq g \leq \bar{g}$. \,
    It remains to show that $\int(\bar{g}-\dot{g})dP < \epsilon$. To see this, 
\[\int(\bar{g}-\dot{g})dP \, \,  = \,  \int \bigg( M_s(\bx,\bTheta^\prime)+\frac{\epsilon}{2}-\max\big\{M_s(\bx,\bTheta^\prime)-\frac{\epsilon}{2},0\big\}\bigg) dP  \le   \epsilon \int dP \,\, = \,\, \epsilon.\]
\end{proof}
\begin{thm}\label{main theorem 2}
    (Strong Consistency) Under  A1 and A2, $\bTheta_{n,s} \overset{a.s.}{\to} \bTheta^\ast$ and $\balpha_{n,s} \overset{a.s.}{\to} \balpha^\ast$ as $n \to \infty$ and $s \to -\infty$.
\end{thm}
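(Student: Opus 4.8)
The plan is to mirror the three-term decomposition used in the single-kernel proof of Theorem \ref{main theorem}, now carrying along the kernel weights $\balpha$ and the deterministic entropy penalty. Write the limiting population objective as $\Psi(\bTheta,\balpha,P)=\int \min_{1\le j\le k}\sum_{l=1}^L\alpha_l\|\phi_l(\bx)-\btheta_{j,l}\|^2\,dP+\lambda\sum_{l=1}^L\alpha_l\log\alpha_l$, let $(\bTheta^\ast,\balpha^\ast)$ be its minimizer, and abbreviate the data-free term by $H(\balpha)=\lambda\sum_{l=1}^L\alpha_l\log\alpha_l$. By the multi-kernel analogue of the identifiability hypothesis A2 it suffices to show that for every $\epsilon>0$ one has $\Psi(\bTheta_{n,s},\balpha_{n,s},P)-\Psi(\bTheta^\ast,\balpha^\ast,P)\le\epsilon$ for all $n$ large and $s$ sufficiently negative, almost surely. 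Theorem \ref{hull 2} guarantees that the empirical minimizers $(\bTheta_{n,s},\balpha_{n,s})$ exist and remain in the fixed compact set $(\mathscr{C}(\phi_1(C))\times\dots\times\mathscr{C}(\phi_L(C)))^k\times[0,1]^L$, so that all suprema invoked below are taken over a compact domain.

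First I would split $\Psi(\bTheta_{n,s},\balpha_{n,s},P)-\Psi(\bTheta^\ast,\balpha^\ast,P)=\xi_1+\xi_2+\xi_3$, inserting the surrogate $\M_s$ integrated first against $P$ and then against $P_n$, exactly as in the single-kernel argument. The crucial observation is that the entropy term $H(\balpha_{n,s})$ is data-free and hence cancels inside $\xi_1$ and $\xi_2$: concretely $\xi_1=\int\big(\min_j\sum_l\alpha_{n,s,l}\|\phi_l(\bx)-\btheta_{n,s,j,l}\|^2-\M_s(\bx,\bTheta_{n,s},\balpha_{n,s})\big)\,dP$ and $\xi_2=\int\M_s(\bx,\bTheta_{n,s},\balpha_{n,s})\,dP-\int\M_s(\bx,\bTheta_{n,s},\balpha_{n,s})\,dP_n$. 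I would then bound $\xi_1\le\epsilon/6$ using the uniform convergence of the multi-kernel surrogates to $f_{-\infty}$ (the appendix analogue of Theorem \ref{uniform}), choosing $s$ small enough that $|\min_j(\cdot)-\M_s|<\epsilon/6$ uniformly on the compact domain, and bound $\xi_2<\epsilon/3$ by the multi-kernel USLLN, Lemma \ref{uslln 2}, taken uniformly over $s\le s_0$ as in the single-kernel case so as to control $\sup_{g\in\G}|\int g\,dP_n-\int g\,dP|$.

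For the third term $\xi_3=\int\M_s(\bx,\bTheta_{n,s},\balpha_{n,s})\,dP_n+H(\balpha_{n,s})-\Psi(\bTheta^\ast,\balpha^\ast,P)$ I would exploit optimality of $(\bTheta_{n,s},\balpha_{n,s})$ for the empirical objective to replace it by the feasible point $(\bTheta^\ast,\balpha^\ast)$, giving $\xi_3\le\int\M_s(\bx,\bTheta^\ast,\balpha^\ast)\,dP_n+H(\balpha^\ast)-\Psi(\bTheta^\ast,\balpha^\ast,P)$; one further application of Lemma \ref{uslln 2} swaps $P_n$ for $P$ at cost $\epsilon/6$, and the uniform surrogate bound replaces $\M_s(\bx,\bTheta^\ast,\balpha^\ast)$ by $\min_j\sum_l\alpha^\ast_l\|\phi_l(\bx)-\btheta^\ast_{j,l}\|^2$ at a further cost $\epsilon/6$. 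At this point the remaining terms telescope against the definition $\Psi(\bTheta^\ast,\balpha^\ast,P)=\int\min_j\sum_l\alpha^\ast_l\|\phi_l(\bx)-\btheta^\ast_{j,l}\|^2\,dP+H(\balpha^\ast)$ to yield $\xi_3\le\epsilon/3$. Summing, $\xi_1+\xi_2+\xi_3\le\epsilon/6+\epsilon/3+\epsilon/3<\epsilon$, and the joint identifiability hypothesis then forces $(\bTheta_{n,s},\balpha_{n,s})\to(\bTheta^\ast,\balpha^\ast)$ almost surely.

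The main obstacle is not the decomposition, which is routine once the pieces are in hand, but ensuring the supporting machinery genuinely holds over the enlarged parameter space: Lemma \ref{uslln 2} and the uniform-convergence theorem must be valid uniformly over $\bTheta$ and $\balpha$ jointly, which relies on joint compactness from Theorem \ref{hull 2} and on continuity of the entropy penalty up to the simplex boundary (using the convention $0\log 0=0$, so that $H$ is bounded and continuous on $[0,1]^L$). I would also need to state carefully the multi-kernel form of A2, namely that the population minimizer $(\bTheta^\ast,\balpha^\ast)$ of $\Psi(\cdot,\cdot,P)$ is well-separated in the joint space, since it is precisely this joint well-separation that converts convergence of objective values into convergence of the parameter pair.
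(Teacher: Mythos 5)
Your proposal is correct and follows essentially the same route as the paper's own proof: the identical three-term decomposition $\xi_1+\xi_2+\xi_3$ (with the entropy penalty cancelling in $\xi_1$ and $\xi_2$ and reappearing in $\xi_3$), the same $\eta/6+\eta/3+\eta/3$ bounding via the multi-kernel uniform convergence theorem, Lemma \ref{uslln 2}, and optimality of the empirical minimizer at the feasible point $(\bTheta^\ast,\balpha^\ast)$. Your closing remarks on joint compactness from Theorem \ref{hull 2}, continuity of the entropy term on $[0,1]^L$, and the joint well-separation form of A2 are exactly the (largely implicit) hypotheses the paper relies on.
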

 \begin{proof}
     We must show for arbitrarily small $r>0$, the minimizer $\bTheta_{n,s}$ eventually lies inside the ball $B((\bTheta^\ast,\balpha^\ast),r)$. From A2, it suffices to show that for all $\eta>0$, there exists $N_1>0$ and $N_2<0$ such that $n>N_1$ and $s<N_2$ implies that $\Psi(\bTheta_{n,s},\balpha_{n,s},P) - \Psi(\bTheta^\ast,\balpha^\ast,P) \leq \eta$, almost everywhere $[P]$. 
    
    We observe that $\Psi(\bTheta_{n,s},\balpha_{n,s},P)-\Psi(\bTheta^\ast,\balpha^\ast,P) = \xi_1 + \xi_2 + \xi_3$, where
    \begin{align*}
    \xi_1 & = \Psi(\bTheta_{n,s},\balpha_{n,s}) - \int \M_s(\bx,\bTheta_{n,s},\balpha_{n,s})dP - \lambda H(\balpha_{n,s})\\
     \xi_2 & = \int \M_s(\bx,\bTheta_{n,s},\balpha_{n,s})dP - \int \M_s(\bx,\bTheta_{n,s},\balpha_{n,s})dP_n \\
     \xi_3 & = \int \M_s(\bx,\bTheta_{n,s},\balpha_{n,s})dP_n - \Psi(\bTheta^\ast,\balpha^\ast)+ \lambda H(\balpha_{n,s}).
    \end{align*}
    We first choose $N_2<0$ such that if $s<0$, $|\min_{\btheta \in \bTheta} \sum_{l=1}^L \alpha_l \|\phi(\bx)-\btheta_l\|^2 - \M_s(\bx,\bTheta,\balpha)| < \eta/6 $ for all $\bx \in C$ and $\bTheta \in (\mathscr{C}(\phi_1(C)) \times \dots \times \mathscr{C}(\phi_L(C)))^k$. This implies that 
    \begin{align*}
        &\xi_1 = \Psi(\bTheta_{n,s},\balpha_{n,s}) - \int \M_s(\bx,\bTheta_{n,s},\balpha_{n,s})dP -\lambda H(\balpha_{n,s}) \\
        & =\int \bigg(\min_{\btheta \in \bTheta} \sum_{l=1}^L \alpha_l \|\phi(\bx)-\btheta_l\|^2 - \M_s(\bx,\bTheta_{n,s},\balpha_{n,s}) \bigg) dP \leq \frac{\eta}{6} \int dP = \frac{\eta}{6}.
    \end{align*}
    Appealing to Lemma \ref{uslln 2}, we choose $N_1>0$ such that $n> N_1$ implies that $\xi_2< \eta/3$. To bound the third term $\xi_3$, we observe the following:
    \begin{align}
    \xi_3 & = \int \M_s(\bx,\bTheta_{n,s},\balpha_{n,s})dP_n - \Psi(\bTheta^\ast,\balpha^\ast) + \lambda H(\balpha_{n,s}) \nonumber\\
    & \le \int \M_s(\bx,\bTheta^\ast,\balpha^\ast)dP_n - \Psi(\bTheta^\ast,\balpha^\ast) + \lambda H(\balpha^\ast) \label{sap2} \\
    & \le \int \M_s(\bx,\bTheta^\ast,\balpha^\ast)dP - \Psi(\bTheta^\ast,\balpha^\ast) + \eta/6 \label{sap3}\\
    & \le \int \left\{\min_{\btheta \in \bTheta} \sum_{l=1}^L \alpha_l \|\phi(\bx)-\btheta_l\|^2 + \eta/6 \right\} dP - \int \min_{\btheta \in \bTheta} \sum_{l=1}^L \alpha_l \|\phi(\bx)-\btheta_l\|^2 dP +\eta/6 \label{sap4}\\
    & = \eta/3 \nonumber
    \end{align}
    Eq. \eqref{sap2} holds since $\bTheta_{n,s}$ is the minimizer for $\int \M_s(\bx, \bTheta,\balpha)dP+\lambda H(\balpha)$, and Eqs. \eqref{sap3} and \eqref{sap4} follow from Lemma \ref{uslln 2} and Theorem \ref{uniform}. Thus,  $$\Psi(\bTheta_{n,s},\balpha_{n,s})-\Psi(\bTheta^\ast,\balpha^\ast) = \xi_1 + \xi_2 + \xi_3 \le \eta/6 + \eta / 3+ \eta/3< \eta.$$ 
 \end{proof}

\section{Results and Performance}
All the experiments were undertaken in an Intel(R) Core(TM)i3-5010U 2.10 GHz processor, 4GB RAM, 64-bit Windows 8 OS in \texttt{R} programming language.
\begin{table}[h!]
 \caption{$p$-Values for Wilcoxon's Signed rank Test on Single Kernel Datasets}
    \label{tab:my_label}
    \centering
    \begin{tabular}{c c c c c}  
    \hline
         Dataset & Kernel Power $k$-means & Kernel $k$-means & Power $k$-means & Spectral Clustering \\
         \hline
       Yale  & --- & 0.0156 & $4.80 \times 10^{-5}$ & 0.0178\\
       JAFFE & --- & 0.0371 & 0.279 & 0.1675 \\
       TOX171 & 0.489 & 0.0024 & 0.0046 & ---\\
       Seeds & --- & 0.5174 & 0.3791 & 0.6844 \\
       Lung & --- & 0.0048 & $2.16 \times 10^{-10}$ & 0.0017\\
       Isolet & --- & 0.0024 & 0.0007 & 0.0476\\
       Lung Discrete & --- & $4.63 \times 10^{-7}$ & 0.0041 & 0.0068\\
       COIL20 & --- & 0.0001 & 0.0756 & 0.0349\\
       GLIOMA & --- & $1.45 \times 10^{-4}$ & 0.349 & $4.15 \times 10^{-6}$ \\ 
       \hline
    \end{tabular}
\end{table}

\section{Additional Experiments with ++ Initialization}
\begin{table}[ht]
     \caption{Average NMI values and average rank on real data; $+$ ($\approx$) indicates statistically significant (equivalent) result with respect to the best performing algorithm for that row.}
    \label{tab single}
    \centering
    \begin{tabular}{c c c c c}
    \hline
         Dataset & Kernel Power $k$-means++ & Kernel $k$-means++ & Power $k$-means++ & Spectral Clustering++ \\
         \hline
         Yale & \textbf{0.6324} & 0.5546 & 0.1764 & 0.5754\\
        JAFFE & \textbf{0.9246} & 0.8467 & 0.9074 & 0.8948\\
        TOX171 & \textbf{0.3946} &  0.2187 &  0.1931 &  0.3741\\
        Seeds & \textbf{0.7648} & 0.7156 & 0.7482 & 0.7382\\
        Lung & \textbf{0.6954}  &  0.5863 &  0.2196 &  0.5550 \\
        Isolet & \textbf{0.8672} & 0.7769 & 0.7812 & 0.8008\\
        Lung discrete & \textbf{0.8423} & 0.5825 & 0.6719 & 0.7349\\
        COIL20 & \textbf{0.8240} & 0.6913 & 0.7530 & 0.7264\\
        GLIOMA & \textbf{0.6412} & 0.4315 &  0.5903 & 0.2876 \\
         \hline
    \end{tabular}
\end{table}
In this section, we compare the peer algorithms when initiated using $k$-means++ seeding in the kernel space rather than random initializations. All the algorithms are seeded from the same initial centroids, chosen by a ++ seeding based on the distances in the kernel space, and run until convergence. This procedure is repeated 20 times and the average NMI vales are reported in Table~\ref{tab single}.  We see that the same trends are conveyed as the results in the maini text; in particular, Table~\ref{tab single} shows that the proposed KPK algorithm outperforms the other peer methods consistently. 












\end{document}